\documentclass{article}

 \usepackage[preprint]{neurips_2026}

\usepackage[utf8]{inputenc} % allow utf-8 input
\usepackage[T1]{fontenc}    % use 8-bit T1 fonts
\usepackage{hyperref}       % hyperlinks
\usepackage{url}            % simple URL typesetting
\usepackage{amsfonts}       % blackboard math symbols
\usepackage{nicefrac}       % compact symbols for 1/2, etc.
\usepackage{microtype}      % microtypography
\usepackage{xcolor}         % colors
\usepackage{hyperref}
\usepackage{url}
\usepackage{amsmath,amssymb,amsthm}
\usepackage{amsthm}
\usepackage{algpseudocode}
\usepackage{microtype}
\usepackage{booktabs} % for professional tables
\usepackage{mathtools}
\newtheorem{proposition}{Proposition}
\usepackage{algorithm}
\theoremstyle{plain}
\newtheorem{theorem}{Theorem}[section]

\theoremstyle{definition}

\theoremstyle{remark}
\newtheorem{remark}[theorem]{Remark}
\usepackage{svg}
\usepackage{graphicx}
\usepackage{subcaption}
\usepackage{tabularx}
\usepackage{booktabs}
\usepackage{array}

\title{BayMOTH: Bayesian optiMizatiOn with meTa-lookahead -- a simple approacH}

\author{%
\textbf{Rahman Ejaz}$^{1,2}$\thanks{Correspondence to: \texttt{reja@lle.rochester.edu}} \quad
\textbf{Varchas Gopalaswamy}$^{1,2}$ \quad
\textbf{Ricardo Luna}$^{3}$ \quad
\textbf{Aarne Lees}$^{1,2}$ \\
\textbf{Vineet Gundecha}$^{3}$ \quad
\textbf{Christopher Kanan}$^{4}$ \quad
\textbf{Soumyendu Sarkar}$^{3}$ \quad
\textbf{Riccardo Betti}$^{1,2,5}$ \\
\\
$^{1}$Laboratory for Laser Energetics, Rochester, NY, USA \\
$^{2}$Department of Mechanical Engineering, University of Rochester, Rochester, NY, USA \\
$^{3}$Hewlett Packard Labs, Santa Clara, CA, USA \\
$^{4}$Department of Computer Science, University of Rochester, Rochester, NY, USA \\
$^{5}$Department of Physics, University of Rochester, Rochester, NY, USA
}

\begin{document}

\maketitle

\begin{abstract}
   Bayesian optimization (BO) has for sequential optimization of expensive black-box functions demonstrated practicality and effectiveness in many real-world settings. Meta-Bayesian optimization (meta-BO) focuses on improving the sample efficiency of BO by making use of information from related tasks. Although meta-BO is sample-efficient when task structure transfers, poor alignment between meta-training and test tasks can cause suboptimal queries to be suggested  during online optimization. To this end, we propose a simple meta-BO algorithm that utilizes related-task information when determined useful, falling back to lookahead otherwise, within a unified framework. We demonstrate competitiveness of our method with existing approaches on function optimization tasks, while retaining strong performance in low task-relatedness regimes where test tasks share limited structure with the meta-training set.
\end{abstract}

\section{Introduction}
For realistic applications that requires optimization of an unknown function $f$, high sample evaluation costs often constrains the sampling budget. Examples of such applications arise in hyperparameter tuning of machine learning models \cite{snoek2012practical,zoph2016neural,feurer2019hyperparameter}, in materials design \cite{frazier2015bayesian}, and in physics applications such as in the domain of laser-driven nuclear fusion experiments (ICF) \cite{ejaz2023direct,gundecha2024meta}. For such tasks, Bayesian optimization (BO) \cite{frazier2018tutorial} has proven its utility. However, on the quest of more performant methods i.e. higher sample efficiency, approaches such as meta-BO and lookahead Bayesian optimization (lookahead BO) have emerged. Both improve sample efficiency albeit both utilize different approaches. In order to facilitate discussion, we borrow from the nomenclature of reinforcement learning and refer to the sampling strategy as the sampling policy denoted by $\pi$.
\par In meta-BO various strategies are used to transfer information \cite{bai2023transfer,ma2025toward} drawn from related tasks into components of the optimization methods' pipeline. This enables a more efficient policy $\pi$ to be realized, especially compared with baselines such as random sampling and BO \cite{volpp2019meta}. In contrast to meta-BO, lookahead BO does not rely on knowledge from related tasks and seeks to improve upon BO by recognizing that conventional BO operates under a one-step assumption without consideration of future samples allowed within the sample budget. Polices focused on maximizing the unknown function $f$ in the immediate step are commonly referred to as ``myopic" policies and recent work \cite{wu2019practical,cheon2024earl} 
demonstrates that during online optimization, lookahead policies reach near-optimal regions in fewer samples across a variety of benchmarks, indicating a superior sampling policy. 

\par Combining meta-BO and lookahead BO appears to be a natural extension. However, it has been relatively unexplored, motivating a straightforward framework combining the two approaches. The presented method, which is referred to as BayMOTH, makes use of alternative alternative virtual environments for inferring the presence of shared structure during online optimization between the test task (i.e., unknown function) and source tasks (training functions) provided to it offline \cite{levine2020offline} during meta-training. The shared structure information is used as a task descriptor similar in spirit to usage in multi-task learning \cite{zhao2019recommending} to gate into the pertinent alternative virtual environment which provides meta-data for informing the lookahead based policy.

We demonstrate the competitiveness of BayMOTH on hyperparameter optimization tasks and scientific experiments (ICF experiments) optimization tasks and compare it to incumbent meta-BO methods, multi-fidelity BO, lookahead BO, and conventional BO. The experimental tests are conducted under the context of relevant meta-data being available for meta-BO. Details are provided in Section~\ref{Exps}.\footnote{All datasets and code is provided as supplementary material.}

\par In many practical applications, however, it is often unclear if prior meta-information is useful. BayMOTH is, by construction, more robust to useless prior information, unlike meta-BO. We demonstrate this through targeted experiments, in which to the meta-BO methods the training tasks that are provided are not strongly related to the unknown function $f$. This experimental setup emulates real-world scenarios where the assumed relationship between the meta-information and the unknown function may not hold. In autonomous applications where the value of prior information is unknown, it is vital that the sampling policy must be able to self-assess the value of its priors. In existing BO works, this mismatch has been remarkably understudied, and to the best of  our knowledge has only been considered previously by the robust multi-fidelity BO (MFBO) of \cite{mikkola2023multi} which accepts a MFBO proposal over a single-fidelity BO proposal, if  the MFBO surrogate passes some conditions.

\par Moreover, as becomes evident in Section~\ref{BayMOTH}, BayMOTH is a relatively simple algorithm when compared with popular meta-BO approaches, which employ reinforcement learning (RL) to incorporate meta-information into their respective sampling policies. It is well known that training RL based policies can be challenging due to the deadly triad of RL \cite{sutton1998reinforcement}. To counteract such effects, state-of-the-art (SOTA) meta-BO methods often include auxiliary tasks for stable policy training such as in the work of \cite{maraval2023end}, which complicates the training process. While simplicity is not sufficient in isolation as a merit-able attribute, the complexity in algorithms and training procedure can preclude usage from practitioners outside of ML/RL expertise. For such practitioners sample efficient black-box optimization can be of great interest due to high cost of evaluations for their respective domains such as for scientific experiments.
Therefore, the competitive performance and simplicity of BayMOTH, we deem useful to practitioners. 

\textbf{Contributions.}
Our main contributions are summarized as follows:
\begin{itemize}
    \item We highlight brittleness in meta-BO under source task mismatch.
    \item We propose a simple meta-Bayesian optimization algorithm (BayMOTH), that remains robust under source-task mismatch and leverages related-task structure without relying on RL pretraining.
    \item BayMOTH unifies lookahead and meta-BO perspectives in a simple algorithm.
    \item We show empirically that BayMOTH performs competitively against state-of-the-art meta-BO methods and strong established baselines, and we situate its performance relative to these approaches.
\end{itemize}

\section{Background}
\label{background}

\subsection{Problem formulation}

We wish to find the optimum of an unknown expensive to evaluate black-box function $f:\mathcal{D} \to \mathbb{R}$ on a known domain $\mathcal{D} \subseteq \mathbb{R}^d$, within a sample budget $T$. A policy $\pi$ chooses points to sample $x \in \mathcal{D}$ from $f(x)$ in a sequential manner resulting in a evaluation history $\mathcal{H}_{t} = \{ x_i, f(x_i) \}_{i=1}^{t}$ after 
$t$ sample steps. A sampling policy $\pi$ is considered superior if the simple regret $r(t) := \max_{x \in \mathcal{D}} f(x) - \max_{1 \le i \le t} f(x_i)$ is minimized in fewer sample steps.

\subsection{Bayesian Optimization}

In Bayesian optimization (BO) \cite{shahriari2015taking,frazier2015bayesian} a probabilistic surrogate model, commonly a \textit{Gaussian Process} ($\mathcal{GP}$) \cite{williams2006gaussian} is used in conjunction with an \textit{acquisition function} (AF) for choosing sample locations that potentially maximize $f(x)$. Given an evaluation history $\mathcal{H}_{t} = \{ x_i, f(x_i) \}_{i=1}^{t}$, the $\mathcal{GP}$  posterior is characterized by the mean $\mu_t(x)=\mathbb{E}[f(x)|\mathcal{H}_{t}]$ and variance $\sigma_t=\mathbb{V}[f(x)|\mathcal{H}_{t}]^{\frac{1}{2}}$, which have closed analytic forms \cite{williams2006gaussian}. Utilizing the $\mathcal{GP}$ posterior, an acquisition function provides a score value for $x \in \mathcal{D}$ taking also into account exploitation-exploration tradeoffs. A widely used AF is \textit{Expected Improvement} (EI) \cite{jones1998ei} which at sample step \textit{t} is given as:
\[
\mathrm{EI}_{t+1}(x) = \mathbb{E}[\mathrm{max}(f(x) - \max_{1 \le i \le t} f(x_i),0)]
\]

The BO policy, $\pi_{\mathrm{BO}} \to x_{t+1}$ is $\operatorname*{argmax}_{x \in \mathcal{D}} \mathrm{EI}_{t+1}(x)$.

\subsection{Meta-Bayesian Optimization}
\label{meta-bo background}

In contrast to BO, where only the evaluation history informs the policy, meta-BO makes use of information available from related functions which is typically available in the form of sample-evaluation pairs as shown: 
\[
\begin{aligned}
\mathcal{D}_{\tau} &= \{(x_1, f_{\tau}(x_1)), \dots, (x_N, f_{\tau}(x_N))\},
\end{aligned}
\]
from any arbitrary policy forming task-specific datasets $\mathcal{D}_1, \mathcal{D}_2, \dots \mathcal{D}_{\tau}$ that collectively constitute the meta-dataset ($\mathcal{D_{\text{meta-train}}}$).
Approaches on how the meta-dataset is utilized differs among different approaches. Prior work has focused on the surrogate model component of BO, such as through ranking-weighted Gaussian process ensembles \cite{feurer2018scalable} and multi-task Gaussian processes, or on the acquisition function component of BO, for example via transfer acquisition functions \cite{wistuba2018scalable}, whereas the work of \cite{maraval2023end} focuses on both components in an end-to-end framework. A comprehensive survey of meta-BO can be found in \cite{bai2023transfer} and \cite{ma2025toward}.

Here, we expand on some foundational meta-BO methods. Generally, these methods learn a general sampling rule (policy) that at test-time require less sample steps compared with non-meta-BO. This is achieved by incorporating information from the meta-dataset obtained from related functions. Examples of related functions may include synthetic families of a well defined function such as sine waves with varying phase and amplitude over a shared domain \cite{finn2017model}, or in more abstract settings can be in the context of hyperparameter optimization (HPO) where model accuracies as a function of different hyperparameter configurations and search spaces are available for multiple classification problems \cite{arango2021hpo}.
\par The approach of \cite{volpp2019meta} (known as MetaBO) makes use of a $\mathcal{GP}$ surrogate model similar to the traditional BO setup, but makes use of a \textit{neural acquisition function} (NAF) in lieu of a classical AF. The meta-dataset is used to train the NAF with the Proximal Policy Optimization (PPO) RL algorithm where NAF takes as input $[\mu_t(x),\sigma_t(x),x]$ for the state-action representation at each sample step to output a score-value on a discrete set of points on the optimization domain. During PPO training the current policy rolls-out $\mathcal{H}_{t} = \{ x_i, f(x_i) \}_{i=1}^{t}$ from $f$ available in $\mathcal{D_{\text{meta-train}}}$, and a respective $\mathcal{GP}_t$  provides $\mu_t(x)$ and $\sigma_t(x)$ to the NAF. In the training loop the input information for NAF along with the auxiliary information (i.e. reward) needed for PPO is recorded for an episode, batches of which are used for the NAF policy update. Conceptually, after meta-training, NAF has learned how to use $[\mu_t(x),\sigma_t(x),x]$ for providing a high-score for promising sample locations over a variety of functions. As mentioned by \cite{volpp2019meta} the inclusion of $x$ as an input enables it to identify shared structure between functions used during offline meta-training and functions it will encounter in usage.

\par Similar to MetaBO \cite{volpp2019meta}, FSAF \cite{hsieh2021reinforced} also uses $[\mu_t(x),\sigma_t(x)]$ from a $\mathcal{GP}$ as part of an embedding for the state-action representation. For the acquisition function a Bayesian approach is used to learn $N$ deep-Q-networks (DQNs) in a MAML bi-level framework \cite{finn2017model,yoon2018bayesian}, such that from the collection of $f \in \mathcal{D}_{\text{meta-train}}$ a generalized policy is learned that is well suited for one-shot adaption to the unknown function $f$, by making gradient updates to the DQN using $\mathcal{H}_{t} = \{ x_i, f(x_i) \}_{i=1}^{t}$.

\subsection{Multi-fidelity Bayesian optimization (MFBO)}
MFBO extends standard BO by allowing (during online sampling) a query be made to the expensive test task or cheaper auxiliary information sources, by utilizing a joint surrogate over all fidelities in an augmented space and an acquisition function that trades off informativeness vs. cost across fidelities. Its benefit relies on the auxiliary fidelities being sufficiently informative about the test task. However, when this assumption fails, MFBO can waste budget on misleading sources and even under perform single-fidelity BO under the same evaluation budget. The robust MFBO of \cite{mikkola2023multi} addresses this issue through a wrapper for GP-based MFBO methods that only accepts a multi-fidelity proposal when reliability conditions are met and otherwise falls back to a conservative single-fidelity alternative. We draw attention that BayMOTH is related to this perspective at a high-level, as both are motivated by the principle that source information should be exploited only when it is determined useful, with the sampling policy remaining robust under source-test task mismatch. However, both approaches are fundamentally different. Furthermore, in the MFBO paradigm, auxiliary source information is revealed only online through sequential queries to that fidelity, with the objective being more focused on reducing costly evaluations on the target task. In contrast, in BayMOTH and meta-BO more broadly, source-task information is available a priori through meta-training, therefore, source task information is already available before online optimization begins.

\subsection{Lookahead Bayesian Optimization}

Lookahead BO polices aim to improve upon BO by taking into consideration the allowance of future evaluations within the sample budget, with the assumption that an exploration sample taken at a current sample step $t$ can reveal promising locations for finding the optimum location in future samples. 
\begin{remark}
    The meta-BO methods mentioned in Section ~\ref{meta-bo background} also utilize $t/T$ as part of their state-action representation which can influence future sample points. However, this approach remains myopic, as, unlike lookahead methods, it does not explicitly simulate possible future sample outcomes.
\end{remark}

The practical two-step lookahead acquisition function proposed by \cite{wu2019practical}, known as 2-OPT, demonstrates that even limited-horizon (two-step) lookahead can yield meaningful performance gains over standard one-step BO. The 2-OPT acquisition function itself consists of two  $\mathrm{EI}$  acquisition functions each pertaining to a stage in the two-stage process, The first $\mathrm{EI}$ component ($\mathrm{EI}_0$) considers $\mathcal{GP}_0$ from $\mathcal{H}_{t} = \{ x_i, f(x_i) \}_{i=1}^{t}$ , whereas $\mathrm{EI}_1$ considers $\mathcal{GP}_1$ from $\mathcal{H}_{t_{\mathrm{sim}}} = \mathcal{H}_t \cup \{x_{t+1},f(x_{t+1}) \sim \mathcal{GP}_0\}$. Thus, the 2-OPT acquisition function takes the form 
\begin{equation*}
\mathrm{2\text{-}OPT}(x_{t+1})
\;\approx\;
\mathrm{EI}_0(x_{t+1})
\;+\;
\mathbb{E}\!\left[
    \max_{x_{t+2}} \mathrm{EI}_1(x_{t+2})
\right].
\end{equation*}

Where, the expectation is taken under $\mathcal{GP}_0$. The BayMOTH policy is based on the 2-OPT acquisition function, its details are presented in section \ref{BayMOTH}.

Multi-step lookahead aims to maximize long-horizon optimization performance. However, due to scalability issues, emulating multi-step BO has been considered instead and is shown to be performant in the work by \cite{cheon2024earl} by making use of an Attention-Deep-Sets encoder for projecting state information onto a latent space suitable for a RL framework.

\section{BayMOTH}
\label{BayMOTH}

\begin{figure}[t]
    \centering
    \includegraphics[width=\linewidth]{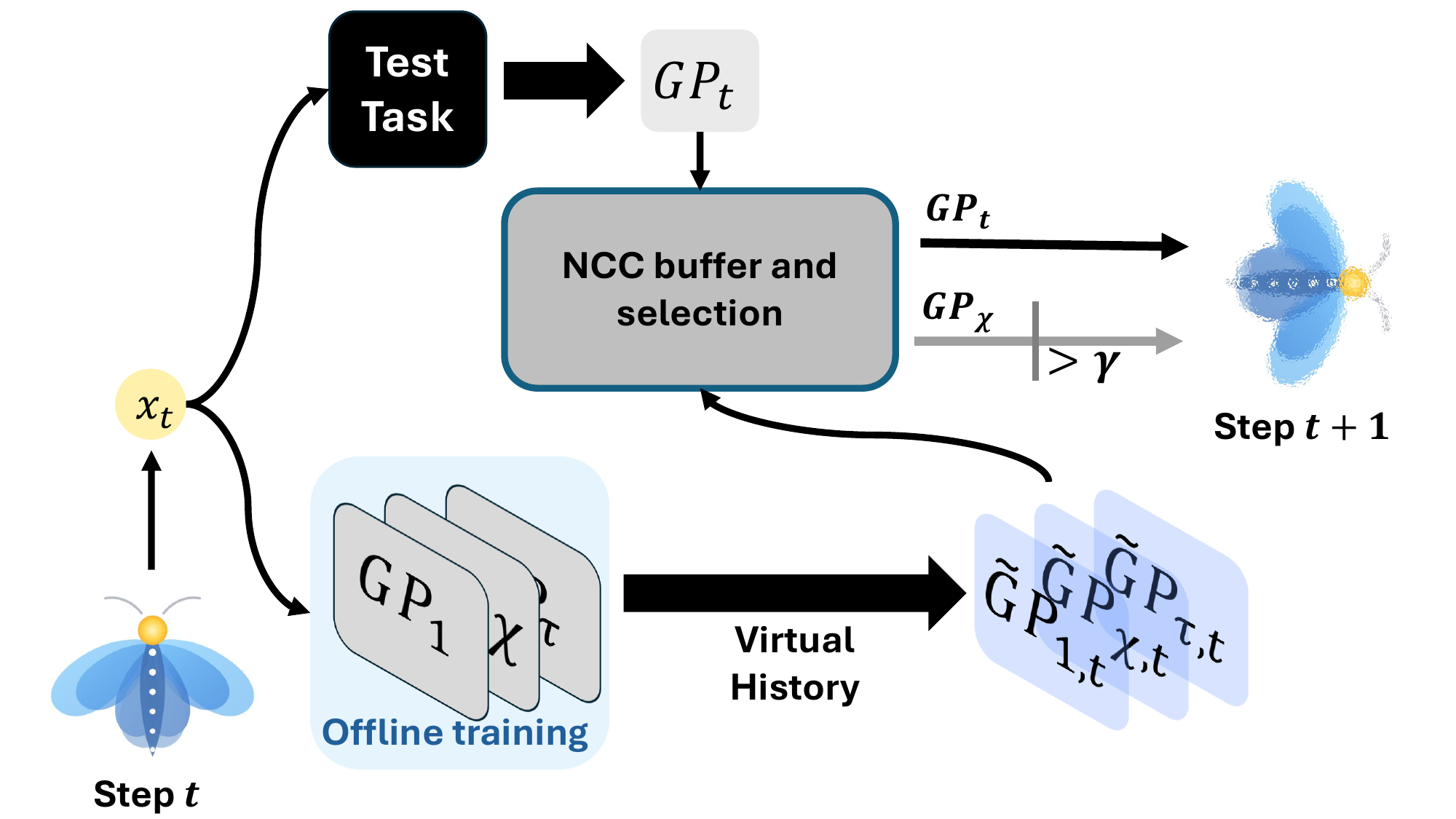}
    \caption{An overview of BayMOTH}
    \label{fig:baymoth}
\end{figure}

In this section we introduce BayMOTH, a meta-BO policy that extends the 2-OPT acquisition function by incorporating meta-information through a well-defined selection rule, and fallbacks to 2-OPT when such information is deemed uninformative. A key feature of BayMOTH is that it is capable of autonomously deciding when the meta-information is useful. We first describe the inclusion procedure of meta-information based on shared structure identification with the related functions, and then present the BayMOTH policy. Finally, we discuss simplifications made to the policy for practical usage. The full algorithm for BayMOTH is presented in Section~\ref{app:baymoth_algo}

\subsection{Shared structure formulism}
 A common peril for meta-learning based approaches is the memorization problem \cite{yin2019meta}. The memorization problem is a form of task overfitting where the meta-learner overfits to the finite set of provided training tasks\footnote{In the meta-learning literature, the terms \emph{related tasks}~\cite{finn2017model} and \emph{related functions} are often used interchangeably.}
 by implicitly encoding task-specific solutions, rather than learning transferable attributes that enable good performance on unseen tasks. This form of overfitting is subtle as the meta-learning based model can still be performant if the unseen task is similar to the training tasks. This concept is formalized in \cite{yin2019meta}. Meta–BO methods are also susceptible to memorization, as strong meta-learned surrogates or acquisition policies can achieve low meta-training regret by encoding task-specific structure present in the training set, rather than learning robust exploration/exploitation mechanisms that generalize to unseen tasks. As a result, when task diversity is limited, performance gains may diminish when evaluated on unseen objectives or complex tasks. A mitigating strategy is to increase the number of training tasks \cite{volpp2019meta}, though, its applicability is limited in scientific or HPO settings where datasets are expensive to generate and the available meta-data is inherently fixed. Other strategies include restricting model capacity and \cite{yin2019meta} propose a meta-regularizer based on information theory.
 
 \par For robustness against memorization and explicit usage of only relevant meta-information, BayMOTH makes use of alternative virtual environments. A alternative virtual environment for each dataset in $\mathcal{D}_{\text{meta-train}}$ is constructed by fitting a $\mathcal{GP}$ to $\mathcal{D}_{\chi} = \{(x_i, f_{\chi}(x_i))\}_{i=1}^{n}$, where $\mathcal{D}_{\chi} \in \mathcal{D}_{\text{meta-train}}$. Consequently, given $\tau$ related tasks available for meta-training, i.e., $\{\mathcal{D}_1, \ldots, \mathcal{D}_{\chi}, \ldots, \mathcal{D}_{\tau}\}$, this procedure induces a collection of $\tau$ task-specific alternative virtual environments containing $\{\mathcal{GP}_1, \ldots, \mathcal{GP}_{\chi}, \ldots, \mathcal{GP}_{\tau}\}$.

At optimization step $t+1$ on an unknown task, the observed sample history
$\mathcal{H}_{t} = \{(x_i, f(x_i)\}_{i=1}^{t}$ is reused to query in each virtual
environment yielding virtual sample
histories
$\tilde{\mathcal{H}}_{t}^{\chi} = \{(x_i, \tilde{f}_{\chi}(x_i)\}_{i=1}^{t}$,
where $\tilde{f}_{\chi}(x_i) = \mathcal{\mu}_{\chi}(x_i)$ from $\mathcal{GP}_{\chi}(x_i)$. Each virtual history
$\tilde{\mathcal{H}}_{t}^{\chi}$ is then used to fit a $\tilde{\mathcal{GP}}_{\chi,t}$ using the same kernel and
hyperparameter configuration as the surrogate $\mathcal{GP}_{t}$ fitted to
the unknown task. The resulting surrogates are compared to
$\mathcal{GP}_{t}$ using normalized cross-correlation (NCC) between their
predictive mean functions, defined as:
\[
\begin{aligned}
\mathrm{NCC}\!\left(\mu_t, \tilde{\mu}_{\chi,t}\right)
&=
\frac{
\sum_{j=1}^{M}
(\mu_t(x_j) - \bar{\mu}_t)
(\tilde{\mu}_{\chi,t}(x_j) - \bar{\tilde{\mu}}_{\chi,t})
}{
\left\|\mu_t - \bar{\mu}_t\right\|_2\,
\left\|\tilde{\mu}_{\chi,t} - \bar{\tilde{\mu}}_{\chi,t}\right\|_2
},
\end{aligned}
\]

where $\mu_{t}(\cdot)$ and $\tilde{\mu}_{\chi,t}(\cdot)$ denote the predictive
means of $\mathcal{GP}_{t}$ and $\tilde{\mathcal{GP}}_{\chi,t}$,
respectively, evaluated on a shared reference set
$\{x_j\}_{j=1}^{M}$, and $\bar{\mu}_{t}$ and
$\bar{\tilde{\mu}}_{\chi,t}$ denote their empirical means. The meta-training
task achieving the highest NCC score is selected as exhibiting the most shared
structure with the unknown task. This score is used as a task descriptor to gate into the pertinent alternative virtual environment through the selection rule, whose corresponding $\mathcal{GP}$ now denoted as $\mathcal{GP}_{\chi^*}$ is used in Eq~\ref{eq:BayMOTH}. If each alternative virtual environment's NCC score is below a threshold, BayMOTH fallbacks to 2-OPT.
\par Normalized cross-correlation is well suited for this comparison, as it is invariant to affine scaling of function values and emphasizes structural alignment, making it robust to differences in magnitude and noise levels even in high-dimensional input spaces. While this procedure requires fitting $\tau$ auxiliary Gaussian processes at each optimization step, the computational overhead remains modest relative to the cost of unknown function evaluations in many real-world usage scenarios. Furthermore, the resulting simplicity and interpretability of the task-selection mechanism provide a favorable trade-off in practice.

\subsection{BayMOTH policy}
Given data $\mathcal{H}_{1:t}$ from the unknown function $f$,
where,
\begin{equation*}
\mathcal{H}_{1:t}
= \Big\{\,(\mathbf{x}_1, f_1), (\mathbf{x}_2, f_2), \dots, (\mathbf{x}_t, f_{t})\,\Big\},
\end{equation*}

the next sample location is provided from BayMOTH according to its policy $\pi_{\mathrm{BayMOTH}}$ which is given by:

\begin{equation}
\label{eq:BayMOTH}
\arg\max_{x_{t+1}}
\Big[
\Lambda_0(x_{t+1})
+
\mathbb{E}_{\mathcal{GP}_{\chi^*}}
\Big[
(1-\alpha)\max_{x_{t+2}}\Lambda_1(x_{t+2})
+
\alpha\,\Omega
\Big]
\Big]
\end{equation}

Here, $\Lambda_0$ is the $\mathrm{EI}$ acquisition function considering the $\mathcal{GP}$ posterior from data $\mathcal{H}_{1:t}$, and $\Lambda_1$ is $\mathrm{EI}$ considering the $\mathcal{GP}$ posterior from data $\mathcal{H}_{1:{t}_{\mathrm{sim}}}$. Where , $\mathcal{H}_{1:{t}_{\mathrm{sim}}}$ contains the simulated sample evaluation at $x_{t+1}$,
\begin{equation}
\mathcal{H}_{1:t_{\text{sim}}}
= \mathcal{H}_{1:t} \cup \Big\{\,x_{t+1},\, f(x_{t+1}) \sim \mathcal{GP}_{\chi^{*}} \,\Big\}.
\end{equation}
The $\Omega$ term is a greedy improvement component defined as
\(
\Omega
=
\max\!\big(0,\,
f(x_{t+1}) - f_{\max}^{(t)}
\big)
\)
where
\(
f_{\max}^{(t)}=\max_{1\le i\le t} f_i.
\)
The term in the expectation is interpreted as a convex combination of a meta-information derived future expected improvement term with an immediate greedy improvement term. By interpolating these terms as such, we are balancing horizon planning with direct exploitation, with both terms being informed by meta-information. The behavior of BayMOTH with this balancing is elaborated through an ablation study on $\alpha$ in Appendix~\ref{app:ablation_alpha}.

\subsection{BayMOTH simplification}
The second term in Eq.\ref{eq:BayMOTH} is an expectation under $\mathcal{GP}_{\chi^{*}}$, considering the randomness of $\mathcal{GP}_{\chi^{*}}$ as $\mathcal{P}(\omega)$ we define this term as,

\[
\mathbb{J}(\omega) \vcentcolon=  \mathbb{E}_{\omega \sim \mathcal{P}}\Big[(1-\alpha)\max_{x_{t+2}}\Lambda_1(x_{t+2};\omega)+\alpha\Omega(\omega)\Big]
\]
and,
\[
\mathbb{Z}(\omega) \vcentcolon=(1-\alpha)\max_{x_{t+2}}\Lambda_1(x_{t+2};\omega) + \alpha \Omega(\omega)
\]
then,
\[
\mathbb{J}(\omega) = \int\mathbb{Z}(\omega)\mathcal{P}(\omega)\mathrm{d}\omega 
\]
\[
= \int \Big((1-\alpha)\max_{x_{t+2}}\Lambda_1(x_{t+2};\omega) + \alpha \Omega(\omega)\Big)\mathcal{P}(\omega)\mathrm{d}\omega
\]

as this is intractable analytically, we use a Monte Carlo estimator

\begin{equation}
\hat{J}_M
\;=\;
\frac{1}{M}
\sum_{m=1}^M
\mathbb{Z}(\omega^{(m)}),
\qquad
\omega^{(m)} \sim \mathrm{GP}_{\chi^{*}} \;\text{i.i.d.}
\end{equation}

This estimator is unbiased and converges to $\mathbb{J}$ as $M \to \infty$ by the law of large numbers, with variance $\mathbb{V}(\hat{J}_M) = \sigma^{2}/M$ where $\sigma^{2}$ is the variance of $\mathbb{Z}$. These properties are detailed in Appendix~\ref{app:proof}.
\par Although the variance of the estimator decays at rate $(O(1/M))$, in practice, relatively small number of samples is sufficient for estimator accuracy \cite{bishop2023deep}. As $GP_{\chi^{*}}$ posterior depends on the samples drawn from $\mathcal{D}_{\chi}$, the variance on the random variable $\mathbb{Z}(\omega)$ depends directly on how well $\mathcal{D}_{\chi}$ covers the underlying function over the domain of interest. When the domain is densely sampled,  the posterior variance of $GP_{\chi^{*}}$ is typically small throughout the domain, which in turn implies that $\sigma^{2}$ is also small. Since the estimator has variance $\sigma^{2}/M$, a modest number of samples is a sufficient approximation for small $\sigma^{2}$.
\par In instances where the domain is sparsely sampled for $\mathcal{D}_{\chi}$, BayMOTH is still performant as a relative ranking of candidate points is the quantity of interest over accurately estimating $\mathbb{Z}$. Moreover, the stochasticity in $\mathbb{Z}$ can even encourage exploration. In all experiments we set $M=5$, which we found to provide a good trade-off between computational cost and performance. A small ablation in Appendix~\ref{app:ablation_m} shows that larger M yields only marginal performance gains.

\subsection{Informal Theoretical Analysis}

BayMOTH is built on a gated two-branch architecture. When the inferred task-relatedness is low, it reverts to the 2-OPT acquisition; when the inferred task-relatedness is sufficiently high ($\gamma$ dependent), it uses a meta-informed lookahead branch that modifies the horizon planning term through the selected alternative virtual environment and the greedy improvement component. Accordingly, rather than comparing BayMOTH to a universally optimal policy, we analyze it relative to an idealized oracle version of itself. This oracle comparator uses the correct branch, correct source-task (when applicable), together with the exact lookahead term, and in particular reduces to 2-OPT on rounds where fallback is the appropriate choice.

To formalize this, let
\[
f^\star := \max_{x \in D} f(x),
\qquad
r_T^{\mathrm{BM}} := f^\star - \max_{1 \le t \le T} f(x_t^{\mathrm{BM}})
\]
denote the simple regret of BayMOTH after \(T\) evaluations.

At round \(t\), let \(\mathcal X_t\) denote the candidate set considered by the optimization routine. We write the fallback 2-OPT acquisition as
\[
A_t^{\mathrm{2OPT}}(x)
:=
\Lambda_{0,t}(x)
+
\mathbb E_{GP_t}\!\left[\max_{x' \in \mathcal X_t} \Lambda_{1,t}(x';x)\right],
\]
and let \(A_t^{\mathrm{meta},\dagger}(x)\) denote the \emph{oracle} meta-informed acquisition, defined as the exact meta-lookahead branch evaluated using the correct source task and exact expectation. Let \(g_t^\dagger \in \{0,1\}\) denote the oracle branch selection decision, where \(g_t^\dagger = 0\) selects the fallback (2-OPT) and \(g_t^\dagger = 1\) selects the oracle meta branch. The resulting oracle-gated comparator is
\[
A_t^\dagger(x)
=
(1-g_t^\dagger) A_t^{\mathrm{2OPT}}(x)
+
g_t^\dagger A_t^{\mathrm{meta},\dagger}(x).
\]
Let
\[
x_t^\dagger \in \arg\max_{x \in \mathcal X_t} A_t^\dagger(x),
\qquad
r_T^\dagger := f^\star - \max_{1 \le t \le T} f(x_t^\dagger)
\]
denote the corresponding oracle-gated simple regret.

BayMOTH itself uses a NCC based branch selection decision \(g_t \in \{0,1\}\). When \(g_t=0\), it optimizes the same fallback 2-OPT branch. When \(g_t=1\), it optimizes a Monte Carlo approximation of the meta branch based on the selected alternative virtual environment. We denote the acquisition actually optimized by BayMOTH as \(\hat A_t^{\mathrm{BM}}(x)\).

We make the following assumptions.

\paragraph{Assumption 1 (bounded objective).}
There exists \(B>0\) such that
\[
0 \le f(x) \le B, \qquad \forall x \in D.
\]

\paragraph{Assumption 2 (meta-vs-fallback branch disagreement probability).}
Let
\[
\mathcal E_t^{\mathrm{route}} := \{g_t \neq g_t^\dagger\}
\]
denote the event that BayMOTH disagrees with the oracle branch selection rule at round \(t\). Assume
\[
\Pr(\mathcal E_t^{\mathrm{route}}) \le p_t.
\]

\paragraph{Assumption 3 (within-meta branch virtual-environment selection error).}
For each round $t$, let $\chi_t^\dagger$ denote the oracle virtual-environment index, i.e., the source task that the oracle meta branch would use when meta-routing is appropriate. Let $\hat{\chi}_t$ denote the virtual-environment index selected by BayMOTH's NCC rule on rounds where $g_t=1$.

Let
\[
A_t^{\mathrm{meta}}(x;\chi)
\]
denote the \emph{common exact meta-acquisition template} evaluated using alternative virtual environment $GP_\chi$ and with the expectation computed exactly (that is, without Monte Carlo approximation). Then, on rounds where both BayMOTH and the oracle use the meta branch,
\[
A_t^{\mathrm{meta},\dagger}(x)=A_t^{\mathrm{meta}}(x;\chi_t^\dagger),
\qquad
A_t^{\mathrm{meta},\star}(x)=A_t^{\mathrm{meta}}(x;\hat{\chi}_t),
\]
and we assume
\[
\sup_{x\in\mathcal X_t}
\left|
A_t^{\mathrm{meta}}(x;\hat{\chi}_t)
-
A_t^{\mathrm{meta}}(x;\chi_t^\dagger)
\right|
\le \eta_t.
\]

If $(g_t,g_t^\dagger)\neq(1,1)$, this assumption is not invoked: the case
$g_t\neq g_t^\dagger$ is handled by Assumption 2, and when
$g_t=g_t^\dagger=0$, both methods use the same fallback $2$-OPT branch.

\paragraph{Assumption 4 (Monte Carlo approximation error).}
Let \(\hat J_{t,M}(x)\) denote the Monte Carlo estimator used by BayMOTH in the meta branch with \(M\) samples.
Assume
\[
\mathrm{Var}(\hat J_{t,M}(x)) = \frac{\sigma_t^2(x)}{M},
\]
and define
\[
\sigma_t := \sup_{x \in \mathcal X_t} \sigma_t(x).
\]

\paragraph{Assumption 5 (acquisition-to-value stability).}
There exists \(L_t > 0\) such that for any two acquisitions \(A_t\) and \(\tilde A_t\) on \(\mathcal X_t\),
\[
\sup_{x \in \mathcal X_t} |A_t(x) - \tilde A_t(x)| \le \varepsilon
\quad\Longrightarrow\quad
f(x_{\tilde A_t}) - f(x_{A_t}) \le L_t \varepsilon,
\]
where \(x_{A_t} \in \arg\max_{x \in \mathcal X_t} A_t(x)\) and similarly for \(x_{\tilde A_t}\).

\begin{proposition}[Informal simple-regret decomposition for BayMOTH]
\label{regret_bound}
For any sequence \(\{\delta_t\}_{t=1}^T\) with \(\delta_t \in (0,1)\), with probability at least
\[
1 - \sum_{t=1}^T |\mathcal X_t| \delta_t - \sum_{t=1}^T p_t,
\]
the simple regret of BayMOTH satisfies
\[
r_T^{\mathrm{BM}}
\le
r_T^\dagger
+
\max_{1 \le t \le T}
g_t^\dagger
L_t
\left(
\eta_t + \frac{\sigma_t}{\sqrt{M\delta_t}}
\right).
\]
Moreover,
\[
\mathbb E[r_T^{\mathrm{BM}}]
\le
\mathbb E[r_T^\dagger]
+
\max_{1 \le t \le T}
g_t^\dagger
L_t
\left(
\eta_t + \frac{\sigma_t}{\sqrt{M\delta_t}}
\right)
+
B\left(
\sum_{t=1}^T |\mathcal X_t|\delta_t + \sum_{t=1}^T p_t
\right).
\]
\end{proposition}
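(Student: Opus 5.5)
The plan is to bound the per-round discrepancy between BayMOTH's optimized acquisition \(\hat A_t^{\mathrm{BM}}\) and the oracle-gated acquisition \(A_t^\dagger\) on a high-probability good event. Assumption 5 then turns this into a per-round value gap, and a simple-regret comparison follows.

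\textbf{Step 1: the good event.} Define \(\mathcal G\) as the intersection over \(t\) of \((\mathcal E_t^{\mathrm{route}})^c\) with the event that, for every \(x\in\mathcal X_t\), the Monte Carlo error satisfies \(|\hat J_{t,M}(x)-J_t(x)|\le \sigma_t/\sqrt{M\delta_t}\). By Chebyshev and Assumption 4, each fixed \(x\) fails with probability at most \(\sigma_t^2(x)/(M\cdot \sigma_t^2/(M\delta_t))\le\delta_t\). A union bound over \(x\in\mathcal X_t\) (assumed finite) and over \(t\), together with Assumption 2 for the routing events, gives \(\Pr(\mathcal G^c)\le\sum_t|\mathcal X_t|\delta_t+\sum_t p_t\).

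On rounds where the meta branch is not used, I will simply skip the MC event, or note that it holds trivially. I will also take the MC estimator to be independent of \(x\)-selection conditional on the history, so that Chebyshev applies pointwise. This conditioning/independence point is the main technical subtlety, since \(\mathcal X_t\) and \(\sigma_t\) are themselves history-dependent. I would handle it by applying Chebyshev conditionally on the history and then taking the union bound.

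\textbf{Step 2: per-round acquisition discrepancy on \(\mathcal G\).} Since there is no routing error on \(\mathcal G\), \(g_t=g_t^\dagger\), and there are two cases.

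If \(g_t^\dagger=0\), both methods optimize \(A_t^{\mathrm{2OPT}}\), so the discrepancy is \(0\) and they choose the same point, or an equally good maximizer.

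If \(g_t^\dagger=1\), the acquisition \(\hat A_t^{\mathrm{BM}}\) differs from \(A_t^{\mathrm{meta}}(\cdot;\hat\chi_t)\) only through the MC term. This gives
\[
\sup_x|\hat A_t^{\mathrm{BM}}-A_t^{\mathrm{meta},\dagger}|\le \sup_x|\hat J_{t,M}-J_t|+\sup_x|A_t^{\mathrm{meta}}(\cdot;\hat\chi_t)-A_t^{\mathrm{meta}}(\cdot;\chi_t^\dagger)|\le \frac{\sigma_t}{\sqrt{M\delta_t}}+\eta_t,
\]
using Assumption 3 for the second term. Assumption 5 then yields
\[
f(x_t^\dagger)-f(x_t^{\mathrm{BM}})\le g_t^\dagger L_t\Bigl(\eta_t+\frac{\sigma_t}{\sqrt{M\delta_t}}\Bigr)=:\Delta_t.
\]
This holds in both cases.

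\textbf{Step 3: simple regret.} Let \(s\) attain \(\max_t f(x_t^\dagger)\). Then
\[
\max_t f(x_t^{\mathrm{BM}})\ge f(x_s^{\mathrm{BM}})\ge f(x_s^\dagger)-\Delta_s\ge \max_t f(x_t^\dagger)-\max_t\Delta_t,
\]
and subtracting from \(f^\star\) gives the high-probability bound. Here I treat the oracle as evaluated on the same candidate sets and histories, as the informal statement implicitly does, and I flag this coupling as an interpretive assumption.

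\textbf{Step 4: expectation.} Split \(\mathbb E[r_T^{\mathrm{BM}}]=\mathbb E[r_T^{\mathrm{BM}}\mathbf 1_{\mathcal G}]+\mathbb E[r_T^{\mathrm{BM}}\mathbf 1_{\mathcal G^c}]\). On \(\mathcal G\), use Step 3 together with \(r_T^\dagger\ge0\), so that \(\mathbb E[r_T^\dagger\mathbf 1_{\mathcal G}]\le\mathbb E[r_T^\dagger]\). On \(\mathcal G^c\), use \(r_T^{\mathrm{BM}}\le B\) from Assumption 1, combined with the bound on \(\Pr(\mathcal G^c)\) from Step 1. If the \(\delta_t,\eta_t\) terms are deterministic, the max term passes through the expectation; otherwise I would bound its expectation by an almost-sure bound.
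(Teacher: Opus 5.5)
Your proposal is correct and follows essentially the same route as the paper's proof sketch. Both arguments use a good event that combines branch agreement with a Chebyshev-plus-union-bound Monte Carlo concentration over $\mathcal X_t$ and over rounds, split the acquisition discrepancy into two cases using Assumptions 3--5, pass from per-round value gaps to simple regret via the best round, and split the expectation using the crude bound $r_T^{\mathrm{BM}}\le B$ on the complement. Your version is slightly more careful than the paper's: the paper omits the Monte Carlo event from its written definition of $\mathcal G_T$, and it does not spell out the use of $r_T^\dagger\ge 0$ or the coupling of the oracle to BayMOTH's history. These are refinements of the same argument, not a different one.
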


\begin{proof}
[Proof sketch]
The proof sketch is given in \ref{proof}
\end{proof}

This proposition should be interpreted as a regret decomposition rather than as a sharp problem-dependent theorem. It shows that BayMOTH tracks an oracle-gated version of its own two-branch architecture. In particular, BayMOTH differs from it only through three BayMOTH-specific sources of error: meta-vs-fallback branch disagreement, mismatch between the selected alternative virtual environment and the oracle source model on meta-routed rounds, and Monte Carlo approximation error in the lookahead term.

\section{Experiments}
\label{Exps}
We demonstrate the competitiveness and position the performance of BayMOTH with existing meta-BO approaches that have demonstrated (MetaBO \cite{volpp2019meta} and NAP \cite{maraval2023end}), with myopic and non-myopic BO (GPBO-EI and 2-OPT \cite{wu2019practical}) and additionally compare with random search (RS). We evaluate across a variety of optimization tasks under a meta-learning context. For hyperparameter optimization tasks we use the HBO-B benchmark \cite{arango2021hpo} and the HPOBench dataset \citep{eggensperger2021hpobench}. For scientific experiments optimization tasks we work in the domain of synthetic ICF experiments. Our results are reported in terms of simple regrets as a function of optimization step, shown in the corresponding figures is the $<>$ line and $\pm\sigma$ shaded region from optimization trajectories obtained over 100 runs (unless stated otherwise) from randomly initialized starting locations on the test task.

\subsection{Synthetic ICF Experiments}
\label{icf_exps}
\begin{figure*}[t]
    \centering
    \includegraphics[width=\textwidth]{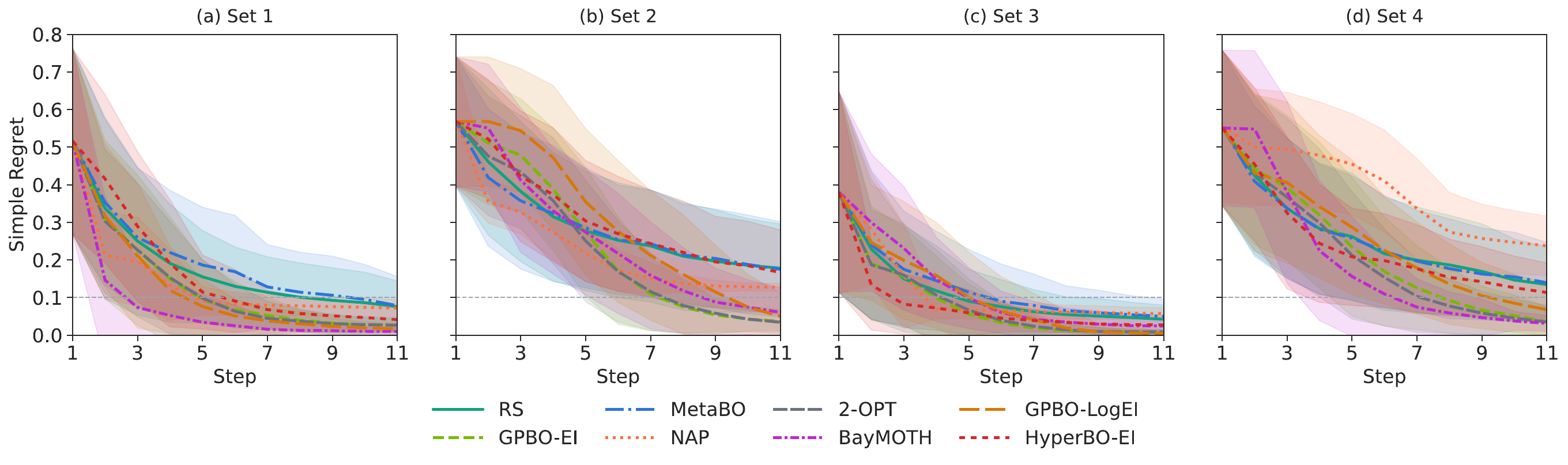}
    \caption{Optimization trajectories across relatedness settings (Sets 1--4) for synthetic ICF experiments. Sample budget is restricted to 10, reflecting the realistic sample budget for ICF.}
    \label{fig:regret_all_sets}
\end{figure*}
\par Inertial confinement fusion (ICF) is a laser-driven scheme for nuclear fusion and offers a path to a clean and abundant energy source \cite{atzeni2004physics}. A key hindrance towards the realization of optimal experimental designs is the limited predictive capability via physics modeling tools, therefore, approaches that use direct experimental feedback for optimization are increasingly becoming popular \cite{ejaz2024optimizing}. However, as ICF experiments are extremely costly and allocation of facility time is competitive, we first demonstrate the practicality of BayMOTH for this purpose through synthetic ICF optimization experiments, similar to the studies of \cite{gundecha2024meta,gutierrez2024explainable}. In addition, we study the behavior of BO approaches under progressively increasing distribution shift between the test task and the source tasks used in meta-training. Specifically, we consider four evaluation regimes spanning low, moderate, medium and high degrees of relatedness to the source tasks. This experimental design is intended to emulate realistic scenarios in which the assumed physics-based models are only weakly related, or potentially unrelated, to the experimental response. Details on task generation, source/test tasks sets' relatedness determination and visualizations of the sets are provided in Appendix \ref{app:icf_exp_details}.

\textbf{High Relatedness (Set 1):} In this setting, BayMOTH reaches the vicinity of the optimum region (0.1 regret) rapidly during the initial iterations (3 steps), suggesting that it is most effective at leveraging specific information from the source tasks, followed by NAP, which also exhibits quicker reduction in regret initially. Here, we highlight that GPBO-EI  is also effective at minimizing the regret within the sample budget, indicating its robustness as a general sampling policy.
\par\textbf{Medium/Moderate Relatedness (Set 2/3):}  In this regime, no clear advantage of meta-BO can be observed. On the contrary for set 2 it is observed that neither NAP or MetaBO are able to obtain less than 0.1 regret ($90\%$ optimal value). This is possibly due to the test task optimum not sufficiently overlapping with that of any individual source task. In contrast, BayMOTH exhibits more robust behavior, and is less susceptible to over reliance on source tasks. For test 3, all methods obtain less than 0.1 regret, with NAP reaching below that level the quickest. We attribute strong performance from all approaches due the optimum region covering a large portion of the domain. 
\par\textbf{Low Relatedness (Set 4):} In this setting we observe both MetaBO and NAP to exhibit suboptimal performance, with both failing for regret reduction. This is suggestive of egregious samples being proposed and is indicative of the memorization problem for meta-BO. As MetaBO and NAP, rely on policies learned from meta-training data, in the low-relatedness regime, they continue to exploit uninformative priors on the policy, and are unable to identify task mismatch leading to downgraded sampling performance. In contrast, BayMOTH is designed to disregard meta-information once it is identified as uninformative, falling back to the 2-OPT policy. As a result, it is able to effectively minimize regret, demonstrating robustness even in this low-relatedness setting. While BayMOTH may be susceptible to memorization effects early in the optimization trajectory, when limited observations make it difficult to discard alternative virtual environments, it nevertheless remains reliable for regret reduction.

\par\textbf{General observations from all Sets:} From these tests it is observed that BayMOTH is performant under a variety of distribution shift scenarios. Whereas, NAP can learn a strong sampling policy when provided with appropriate related tasks during meta-training, as can been deduced from Figure~\ref{fig:regret_all_sets}a and  Figure~\ref{fig:regret_all_sets}c, MetaBO is less competitive on these tests, potentially due to the limited number of available meta-training tasks (10 tasks). The weaker performance could also be attributed to generally a weaker sampling policy for MetaBO, which is consistent with the observations of \cite{maraval2023end} where MetaBO rarely outperformed GPBO-EI. We further investigate the sensitivity of provided training tasks on test task performance in Appendix~\ref{app:sensitivity_details}.

\subsection{Hyper Parameter Optimization Experiments}
\begin{figure*}[t]
    \centering
    \includegraphics[width=\textwidth]{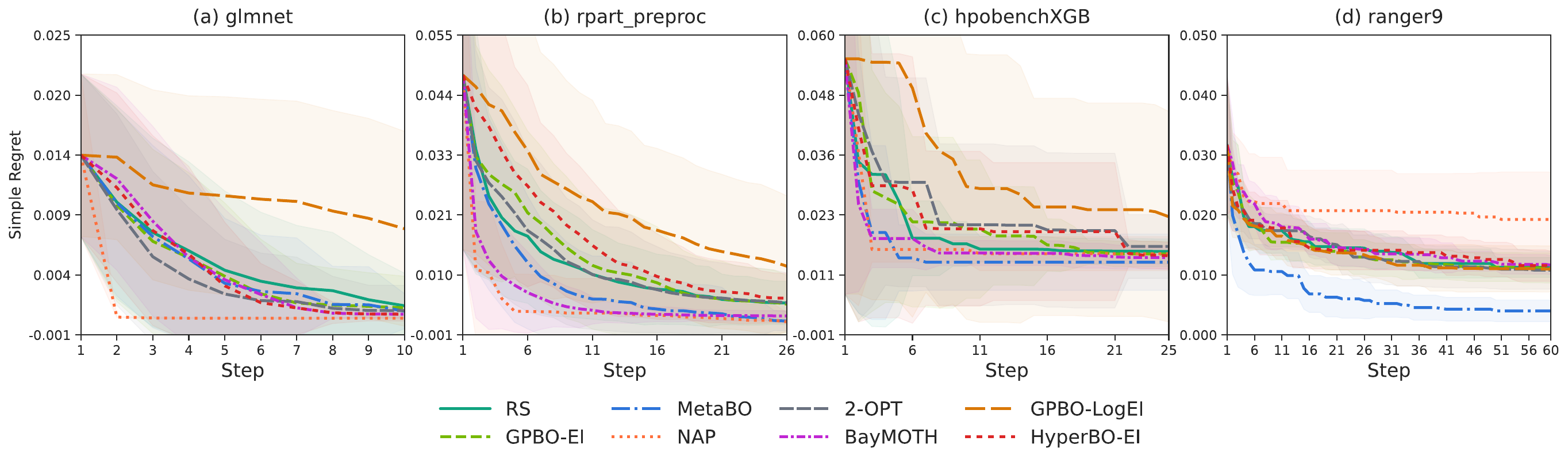}
    \caption{Simple regret trajectories on HPO tasks. The combined plot includes HPO-B tasks such as glmnet (ID:5860) and rpart.preproc (ID:4796), along with the HPOBench XGBoost results. Curves show performance versus sample steps, with mean trajectories and variability over randomly initialized starting locations.}
    \label{fig:hpo_all_results}
\end{figure*}

Here, we experiment on hyperparameter optimization tasks using the HPO-B benchmark \citep{arango2021hpo} and HPOBench dataset \citep{eggensperger2021hpobench}. These datasets contain accuracies of classification models as a function of model hyperparameters. The HPO-B datasets contains different search spaces comprising of model class, hyperparameter settings, and search domain which are evaluated on different supervised training datasets and splits. The HPO-B benchmark provides predefined datasets to be used for meta-training and meta-testing. From this benchmark dataset we select the search spaces glmnet(ID:5860, dim=2) and rpart.preproc(ID:4796, dim=3). For these search spaces all meta-training datasets are used for training and evaluation is done on one task drawn from the meta-test datasets. Due to resource limitations, for these HPO experiments, we restrict the number search spaces for evaluation. We refer the reader to \citep{arango2021hpo} for more details on this benchmark. Pertaining to HPOBench \citep{eggensperger2021hpobench} we use the dataset utilized by \cite{maraval2023end}, which is a hyperparameter optimization task (6D) for XGBoost. The same datasets as \cite{maraval2023end} are used for meta-training and testing is conducted on a dataset drawn from the test datasets.
\par On the glmnet search space, NAP substantially outperforms all competing methods, as shown in Figure~\ref{fig:hpo_all_results}. The other approaches, including BayMOTH, exhibit comparable performance. For the rpart search space, NAP again achieves the strongest results, followed by BayMOTH and MetaBO. Notably, a clear performance gap emerges between meta-BO methods and non-meta-BO baselines, underscoring the benefit of leveraging meta-information when it is available.
On the higher-dimensional HPOBench optimization task for XGBoost, meta-BO methods demonstrate a consistent advantage during the early optimization phase (up to step 5). Beyond this point, MetaBO ultimately attains the best overall performance.

\section{Discussion}

In this work, we introduce BayMOTH, a simple yet effective framework that unifies meta–Bayesian optimization and two-step lookahead BO. To the best of our knowledge, no prior work combines meta-BO with lookahead planning while explicitly accounting for the relatedness between source and target tasks and its impact on the robustness of the sampling policy. BayMOTH extends the 2-OPT acquisition function to operate with meta-information and also includes an immediate greedy improvement term. Both the horizon planning component and the immediate improvement component influence the sample suggestion strategy, as is analyzed in Appendix~\ref{app:ablation_alpha}. Despite the simplicity of BayMOTH, our experiments demonstrate that it is competitive with state-of-the-art meta-BO approaches. Moreover, by selectively incorporating meta-information, BayMOTH exhibits a robust sampling policy even when training tasks have limited similarity to the test task, a behavior that is observed across our experiments.
\par We acknowledge that BayMOTH’s interpretability and ease of implementation come at the cost of increased computational overhead. In particular, single-sample suggestion times can reach up to one minute in the 6-dimensional setting. More efficient optimization strategies \cite{wu2019practical} for the 2-OPT acquisition function could therefore be explored for the horizon-planning term in Equation~\ref{eq:BayMOTH} in future versions of the method.Nevertheless, in practical HPO settings, where per-evaluation costs are often orders of magnitude larger, this overhead is negligible. Moreover, our experiments suggest that BayMOTH often attains near-optimal configurations 2–3 evaluations earlier, making this trade-off favorable in practice. BayMOTH is particularly appealing in low- to moderately high-dimensional settings where source-task relatedness is uncertain and complicated training procedures are undesirable
\par In future works, we plan to investigate learned task-similarity identification mechanisms and more sophisticated routing strategies, potentially spanning multiple environments, to better exploit both global and local task features for improved sample efficiency. These directions may benefit from ideas drawn from mixture-of-experts (MoE) models \cite{mu2025comprehensive}. Additionally, we aim to further explore BayMOTH’s core principle, leveraging meta-information when it is informative and reverting to a robust default strategy otherwise, through alternative approaches such as Q-transformers \cite{chebotar2023q}, which are well suited for multi-task learning and could naturally integrate within a MoE framework.

\section*{Impact Statement}

The simplicity of our approach lowers the barrier to adoption and enables practitioners, especially those without specialized reinforcement learning expertise, to readily understand and fully leverage BayMOTH's capabilities. Overall, this paper presents work whose goal is to advance the field of sample efficient optimization. There are many potential societal consequences of our work, none which we feel must be specifically highlighted here.

\begin{ack}
This material is based upon work supported by the Department of Energy under Award Nos. DE-SC0024381, DESC0022132, DE-SC0021072 and DE-SC0024456 as well as the Department of Energy (National Nuclear Security Administration) University of Rochester “National Inertial Confinement Program” under Award No. DE-NA0004144. This report was prepared as an account of work sponsored by an agency of the United States Government. Neither the United States Government nor any agency thereof, nor any of their employees, makes any warranty, express or implied, or assumes any legal liability or responsibility for the accuracy, completeness, or usefulness of any information, apparatus, product, or process disclosed, or represents that its use
would not infringe privately owned rights. Reference herein to any specific commercial product, process, or service by trade name, trademark, manufacturer, or otherwise does not necessarily constitute or imply its endorsement, recommendation, or favoring by the United States Government or any agency thereof. The views and opinions of authors expressed herein do not necessarily state or reflect those of the United States Government or any agency thereof.
\end{ack}

\bibliography{BayMOTH}
\bibliographystyle{tmlr}

%%%%%%%%%%%%%%%%%%%%%%%%%%%%%%%%%%%%%%%%%%%%%%%%%%%%%%%%%%%%

\appendix
\newpage
\section{Appendix}
%%%%%%%%%%%%%%%%%%%%%%%%%%%%%%%%%%%%%%%%%%%%%%%%%%%%%%%%%%%%%%%%%%%%%%%%%%%%%%%
%%%%%%%%%%%%%%%%%%%%%%%%%%%%%%%%%%%%%%%%%%%%%%%%%%%%%%%%%%%%%%%%%%%%%%%%%%%%%%%
% APPENDIX
%%%%%%%%%%%%%%%%%%%%%%%%%%%%%%%%%%%%%%%%%%%%%%%%%%%%%%%%%%%%%%%%%%%%%%%%%%%%%%%
%%%%%%%%%%%%%%%%%%%%%%%%%%%%%%%%%%%%%%%%%%%%%%%%%%%%%%%%%%%%%%%%%%%%%%%%%%%%%%%

\subsection{BayMOTH Pseudo Code}
\label{app:baymoth_algo}

\begin{algorithm}[h]
\caption{\textsc{BayMOTH}}
\label{alg:baymoth}
\begin{algorithmic}[1]
\Require Meta-training datasets $\mathcal{D}_{\text{meta-train}}=\{\mathcal{D}_{\chi}\}_{\chi=1}^{\tau}$;
greedy balancing parameter $\alpha$; correlation threshold $\gamma$;
kernel $k(\cdot,\cdot)$ and $\mathcal{GP}$ hyperparameters $\theta$; unseen-task history $\mathcal{H}_0=\emptyset$;
domain $x \in D$;
optimization routine over $x$;
sample budget $T$.
\State \textbf{alternative virtual environments:} For each $\chi \in \{1,\ldots,\tau\}$ fit $\mathcal{GP}_{\chi}$ using $\mathcal{D}_{\chi}$.
\For{$t=1,2,\ldots,T$}
    \State \textbf{Fit target surrogate:} Fit $\mathcal{GP}_{t}$ on $\mathcal{H}_{t}=\{(x_i,f(x_i)\}_{i=1}^{t}$ using $(k,\theta)$.
    \State Compute predictive mean $\mu_{t}(\cdot)$ from $\mathcal{GP}_{t}$.
    \Statex

    \State \textbf{Construct virtual histories and select closest task:}
    \For{$\chi=1,2,\ldots,\tau$}
        \State Form virtual history $\tilde{\mathcal{H}}_{t}^{\chi}=\{(x_i,\tilde{f}_{\chi}(x_i)\}_{i=1}^{t}$ with $\tilde{f}_{\chi} = \mu_{\chi}(x_i)\text{ from } \mathcal{GP}_{\chi}$.
        \State Fit auxiliary $\tilde{\mathcal{GP}}_{\chi,t}$ on $\tilde{\mathcal{H}}_{t}^{\chi}$ using the same $(k,\theta)$.
        \State Compute predictive mean $\tilde{\mu}_{\chi,t}(\cdot)$ from $\tilde{\mathcal{GP}}_{\chi,t}$.
        \State Compute similarity score
        \[
        \mathcal{S}_{\chi,t} \leftarrow \mathrm{NCC}\!\left(\mu_{t},\tilde{\mu}_{\chi,t}\right)
        \]
    \EndFor

    \Statex
    \State \textbf{Sample selection:}
    \If{$\max_{\chi \in \{1,\ldots,\tau\}} \mathcal{S}_{\chi,t} > \gamma$} \Comment{correlation threshold $\gamma$}
        \vspace{1mm}
        \State $\chi^\star \leftarrow \arg\max_{\chi \in \{1,\ldots,\tau\}} \mathcal{S}_{\chi,t}$.
        \vspace{1mm}
       \State \textbf{Compute } $x_{t+1}$ \textbf{using optimization routine:}
        \State $x_{t+1} \leftarrow 
        \arg\max_{x_{t+1}}\Big[
        \Lambda_0(x_{t+1})
        +
        \,\mathbb{E}_{\mathcal{GP}_{\chi^\star}}
        \Big[
        (1-\alpha)\max_{x_{t+2}}\Lambda_1(x_{t+2}) + \alpha \Omega
        \Big]
        \Big]$ \Comment{$\pi_{\mathrm{BayMOTH}}$}

    \Else
        \State \textbf{Compute } $x_{t+1}$ \textbf{using optimization routine:}
        \State $x_{t+1} \leftarrow 
        \arg\max_{x_{t+1}}\Big[
        \Lambda_0(x_{t+1})
        +
        \,\mathbb{E}_{\mathcal{GP}_{t}}
        \Big[
        \max_{x_{t+2}}\Lambda_1(x_{t+2})
        \Big]
        \Big]$ \Comment{fallback}
    \EndIf
    \State Evaluate the true objective: $f_{t+1} \leftarrow f(x_{t+1})$.
    \State Update history: $\mathcal{H}_{t+1} \leftarrow \mathcal{H}_{t} \cup \{(x_{t+1},f_{t+1})\}$.

\EndFor
\end{algorithmic}
\end{algorithm}

\subsection{Proof of Properties of the Monte Carlo Estimator}
\label{app:proof}
Consider the second term in Eq.\ref{eq:BayMOTH}, also shown below
\[
\mathbb{J}(\omega) =  \mathbb{E}_{\omega \sim \mathcal{GP}_{\chi^{*}}}[\mathbb{Z}(\omega)],
\qquad
\mathbb{Z}(\omega) =(1-\alpha)\max_{x_{t+2}}\Lambda_1(x_{t+2};\omega) + \alpha \Omega(\omega)
\]

to approximate $\mathbb{J}$ we draw M iid samples $\omega^{(1)},\omega^{(2)},\dots,\omega^{(M)} \sim \mathcal{GP}_{\chi^{*}}$ and compute the estimator 
\[\hat{J}_M
\;=\;
\frac{1}{M}
\sum_{m=1}^M
\mathbb{Z}^{(m)},
\qquad
\mathbb{Z}^{(m)} = \mathbb{Z}(\omega^{(m)})
\]
The statistical guarantees for $\hat{J}_M$ are outlined below.

\noindent\textit{Estimator is unbiased:}\quad
\begin{align*}
\mathbb{E}[\hat{J}_M]
&= \mathbb{E}\!\left[\frac{1}{M}\sum_{m=1}^M \mathbb{Z}^{(m)}\right] \\[4pt]
&= \frac{1}{M}\sum_{m=1}^M \mathbb{E}[\mathbb{Z}^{(m)}]
&& \text{(linearity of expectation)} \\[6pt]
&= \frac{1}{M}\sum_{m=1}^M \mathbb{E}[\mathbb{Z}]
&& \text{($\mathbb{Z}^{(1)},\dots,\mathbb{Z}^{(M)}$ identically distributed)} \\[6pt]
&= \frac{1}{M}\, M\, \mathbb{E}[\mathbb{Z}] \\[4pt]
&= \mathbb{E}[\mathbb{Z}] \, .
\end{align*}

\noindent\textit{Estimator variance:}\quad
\begin{align*}
\mathbb{V}[\hat{J}_M] 
&= \mathbb{E}[\hat{J}_M^2] - \mathbb{E}[\hat{J}_M]^2 \\
&= \mathbb{E}\Big[\Big( \frac{1}{M}\sum_{m=1}^M \mathbb{Z}^{(m)}\Big)^2 \Big] - \mathbb{E}[\mathbb{Z}]^2\\
&= \mathbb{E}\Big[ \frac{1}{M^2} \sum_{m=1}^M (\mathbb{Z}^{(m)})^2 + \frac{1}{M^2} \sum_{m=1}^M\sum_{k=1,k\not=m}^M \mathbb{Z}^{(m)}\mathbb{Z}^{(k)} \Big] - \mathbb{E}[\mathbb{Z}]^2\\
&=  \frac{1}{M^2} \sum_{m=1}^M \mathbb{E}[(\mathbb{Z}^{(m)})^2] + \frac{1}{M^2} \sum_{m=1}^M\sum_{k=1,k\not=m}^M \mathbb{E}[\mathbb{Z}^{(m)}\mathbb{Z}^{(k)}]  - \mathbb{E}[\mathbb{Z}]^2\\
&=  \frac{1}{M^2} \sum_{m=1}^M \mathbb{E}[(\mathbb{Z}^{(m)})^2] + \frac{1}{M^2} \sum_{m=1}^M\sum_{k=1,k\not=m}^M \mathbb{E}[\mathbb{Z}^{(m)}]\mathbb{E}[\mathbb{Z}^{(k)}]  - \mathbb{E}[\mathbb{Z}]^2\\
&=  \frac{1}{M^2} \sum_{m=1}^M \mathbb{E}[\mathbb{Z}^2] + \frac{1}{M^2} \sum_{m=1}^M\sum_{k=1,k\not=m}^M \mathbb{E}[\mathbb{Z}]\mathbb{E}[\mathbb{Z}]  - \mathbb{E}[\mathbb{Z}]^2\\
&=  \frac{1}{M^2}M* \mathbb{E}[\mathbb{Z}^2] + \frac{1}{M^2} M(M-1)\mathbb{E}[\mathbb{Z}]^2  - \mathbb{E}[\mathbb{Z}]^2\\
&=  \frac{1}{M} \mathbb{E}[\mathbb{Z}^2] - \frac{1}{M}\mathbb{E}[\mathbb{Z}]^2\\
&=  \frac{\sigma^2}{M} \\
\end{align*}

\newpage
\subsection{Proofs}
\label{proof}
\begin{proof}[Proof sketch of Proposition \ref{regret_bound}]
Let \(\mathcal G_T\) denote the event that, for every round \(t \le T\), BayMOTH agrees with the oracle branch decision 

We first work on the event \(\mathcal G_T\).

Fix a round \(t\). Since \(g_t = g_t^\dagger\) on \(\mathcal G_T\), there are two cases.

If \(g_t^\dagger = 0\), both BayMOTH and the oracle comparator optimize the same fallback acquisition \(A_t^{\mathrm{2OPT}}\), and hence there is no acquisition discrepancy at round \(t\).

If \(g_t^\dagger = 1\), both procedures use the meta branch, and the discrepancy can be decomposed as
\[
\hat A_t^{\mathrm{BM}}(x) - A_t^\dagger(x)
=
\underbrace{
\bigl(\hat A_t^{\mathrm{BM}}(x) - A_t^{\mathrm{meta},\star}(x)\bigr)
}_{\text{Monte Carlo error}}
+
\underbrace{
\bigl(A_t^{\mathrm{meta},\star}(x) - A_t^{\mathrm{meta},\dagger}(x)\bigr)
}_{\text{virtual-environment mismatch}}.
\]

Under Assumption 3,
\[
\sup_{x \in \mathcal X_t}
\bigl|
A_t^{\mathrm{meta},\star}(x) - A_t^{\mathrm{meta},\dagger}(x)
\bigr|
\le \eta_t.
\]
For the Monte Carlo term, Assumption 4 and Chebyshev's inequality imply that for each fixed \(x \in \mathcal X_t\),
\[
\Pr\!\left(
|\hat J_{t,M}(x) - J_t^{\mathrm{meta},\star}(x)|
\ge
\frac{\sigma_t}{\sqrt{M\delta_t}}
\right)
\le \delta_t.
\]
Equivalently, for each fixed \(x \in \mathcal X_t\), with probability at least \(1-\delta_t\),
\[
|\hat J_{t,M}(x) - J_t^{\mathrm{meta},\star}(x)|
\le
\frac{\sigma_t}{\sqrt{M\delta_t}}.
\]

Now define the bad event
\[
\mathcal B_t
:=
\left\{
\exists x \in \mathcal X_t :
|\hat J_{t,M}(x) - J_t^{\mathrm{meta},\star}(x)|
>
\frac{\sigma_t}{\sqrt{M\delta_t}}
\right\}.
\]
By the union bound,
\[
\Pr(\mathcal B_t)
\le
\sum_{x \in \mathcal X_t}
\Pr\!\left(
|\hat J_{t,M}(x) - J_t^{\mathrm{meta},\star}(x)|
>
\frac{\sigma_t}{\sqrt{M\delta_t}}
\right)
\le
|\mathcal X_t|\,\delta_t.
\]
Hence, with probability at least \(1-|\mathcal X_t|\delta_t\),
\[
\forall x \in \mathcal X_t,
\qquad
|\hat J_{t,M}(x) - J_t^{\mathrm{meta},\star}(x)|
\le
\frac{\sigma_t}{\sqrt{M\delta_t}}.
\]
Since the Monte Carlo approximation enters through the acquisition, this implies
\[
\sup_{x \in \mathcal X_t}
\bigl|
\hat A_t^{\mathrm{BM}}(x) - A_t^{\mathrm{meta},\star}(x)
\bigr|
\le
\frac{\sigma_t}{\sqrt{M\delta_t}}.
\]

Therefore, on \(\mathcal G_T\),
\[
\sup_{x \in \mathcal X_t}
\bigl|
\hat A_t^{\mathrm{BM}}(x) - A_t^\dagger(x)
\bigr|
\le
\sup_{x \in \mathcal X_t}
\bigl|
\hat A_t^{\mathrm{BM}}(x) - A_t^{\mathrm{meta},\star}(x)
\bigr|
+
\sup_{x \in \mathcal X_t}
\bigl|
A_t^{\mathrm{meta},\star}(x) - A_t^{\mathrm{meta},\dagger}(x)
\bigr|
\le
\eta_t + \frac{\sigma_t}{\sqrt{M\delta_t}}.
\]
Multiplying by \(g_t^\dagger\) makes the bound valid uniformly over both fallback and meta rounds:
\[
\sup_{x \in \mathcal X_t}
\bigl|
\hat A_t^{\mathrm{BM}}(x) - A_t^\dagger(x)
\bigr|
\le
g_t^\dagger
\left(
\eta_t + \frac{\sigma_t}{\sqrt{M\delta_t}}
\right).
\]

Assumption 5 then transfers this to an objective-value difference:
\[
f(x_t^\dagger) - f(x_t^{\mathrm{BM}})
\le
g_t^\dagger
L_t
\left(
\eta_t + \frac{\sigma_t}{\sqrt{M\delta_t}}
\right).
\]
Since simple regret depends on the best value attained over all rounds, we obtain on \(\mathcal G_T\)
\[
r_T^{\mathrm{BM}}
\le
r_T^\dagger
+
\max_{1 \le t \le T}
g_t^\dagger
L_t
\left(
\eta_t + \frac{\sigma_t}{\sqrt{M\delta_t}}
\right).
\]

Finally, by Assumption 2 and a union bound over rounds,
\[
\Pr(\mathcal G_T^c)
\le
\sum_{t=1}^T |\mathcal X_t|\delta_t + \sum_{t=1}^T p_t,
\]
which gives the stated high-probability bound.

For the expectation bound, decompose over \(\mathcal G_T\) and \(\mathcal G_T^c\). On \(\mathcal G_T\), the above regret bound holds. On \(\mathcal G_T^c\), Assumption 1 gives the crude bound \(r_T^{\mathrm{BM}} \le B\). This yields
\[
\mathbb E[r_T^{\mathrm{BM}}]
\le
\mathbb E[r_T^\dagger]
+
\max_{1 \le t \le T}
g_t^\dagger
L_t
\left(
\eta_t + \frac{\sigma_t}{\sqrt{M\delta_t}}
\right)
+
B\left(
\sum_{t=1}^T |\mathcal X_t|\delta_t + \sum_{t=1}^T p_t
\right).
\]
\end{proof}

\newpage
\subsection{Ablation on $\gamma$ }
\begin{figure*}[h]
    \centering
    \begin{minipage}[t]{0.32\textwidth}
        \centering
        \includegraphics[width=\linewidth]{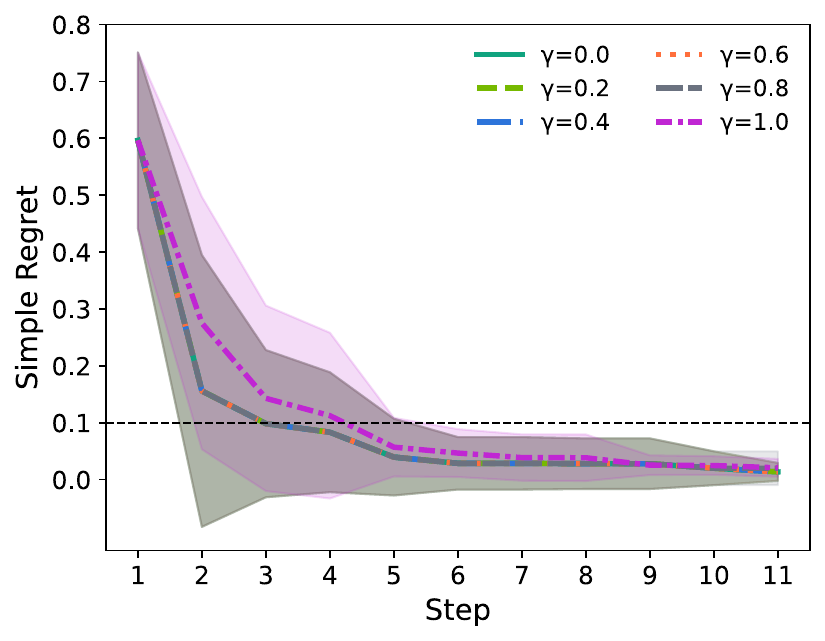}
    \end{minipage}\hfill
    \begin{minipage}[t]{0.64\textwidth}
        \centering
        \includegraphics[width=\linewidth]{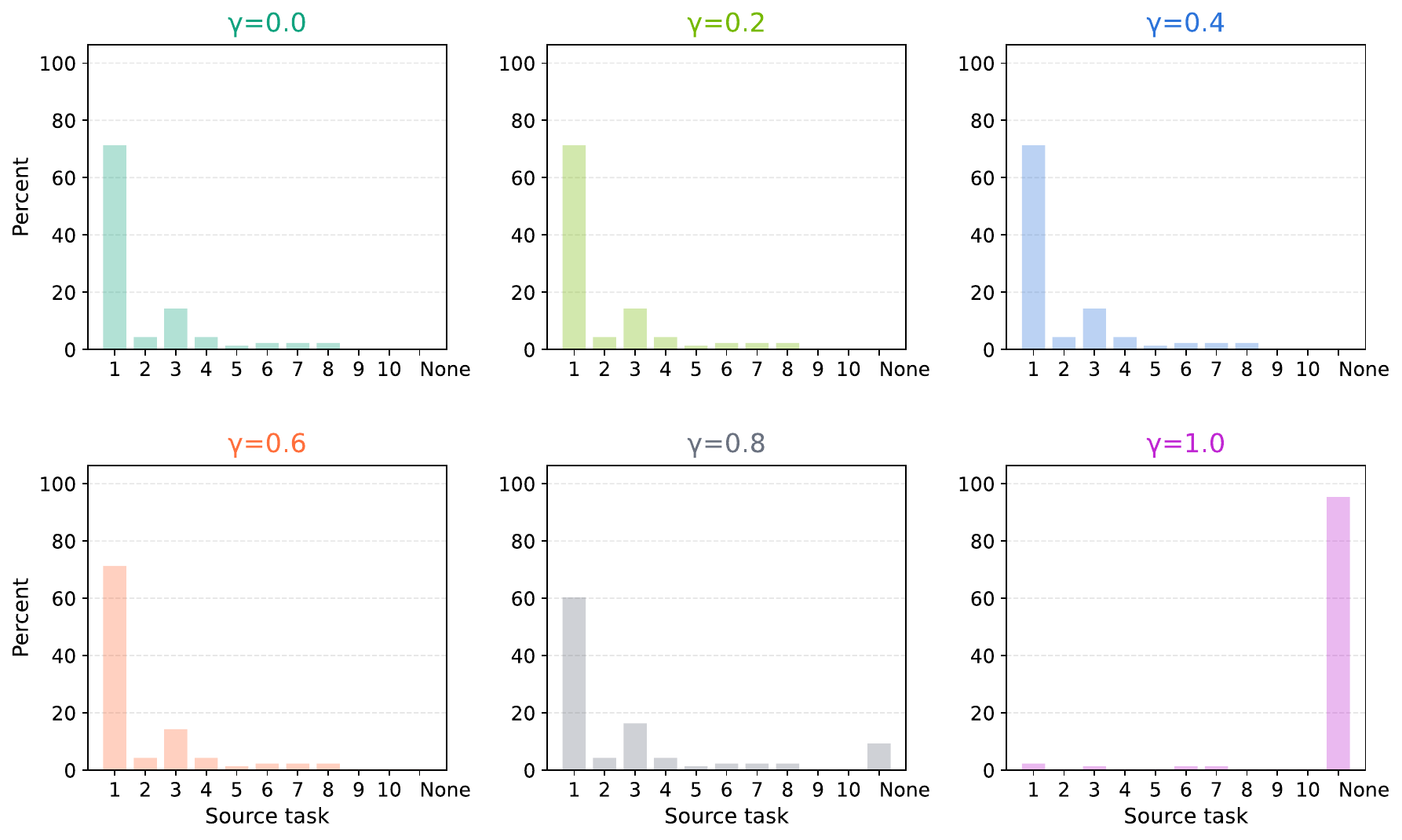}
    \end{minipage}
    \caption{Set 1 (highest NCC in set $\approx$ 0.85) trajectory and source-task usage across $\gamma$ values.}
    \label{fig:gamma-both}
\end{figure*}

\begin{figure*}[h]
    \centering
    \begin{minipage}[t]{0.32\textwidth}
        \centering
        \includegraphics[width=\linewidth]{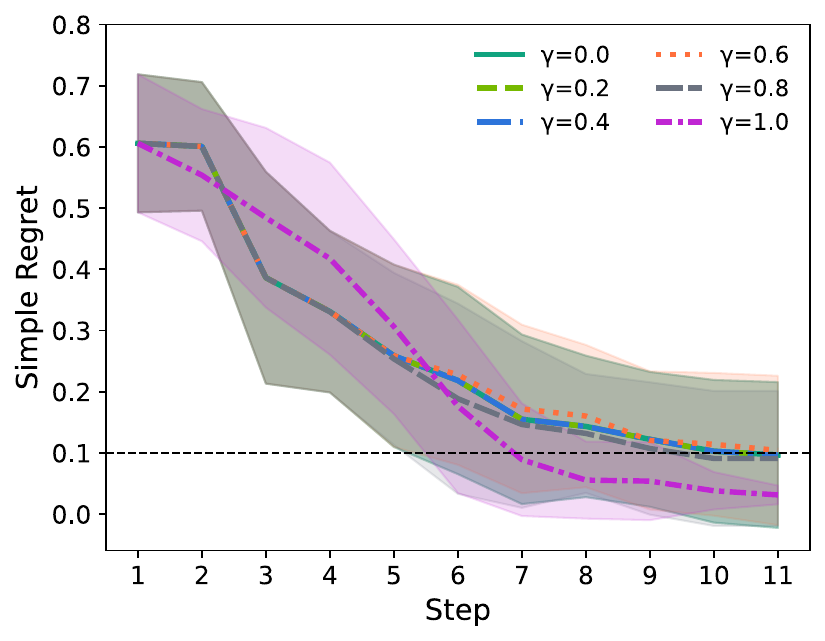}
    \end{minipage}\hfill
    \begin{minipage}[t]{0.64\textwidth}
        \centering
        \includegraphics[width=\linewidth]{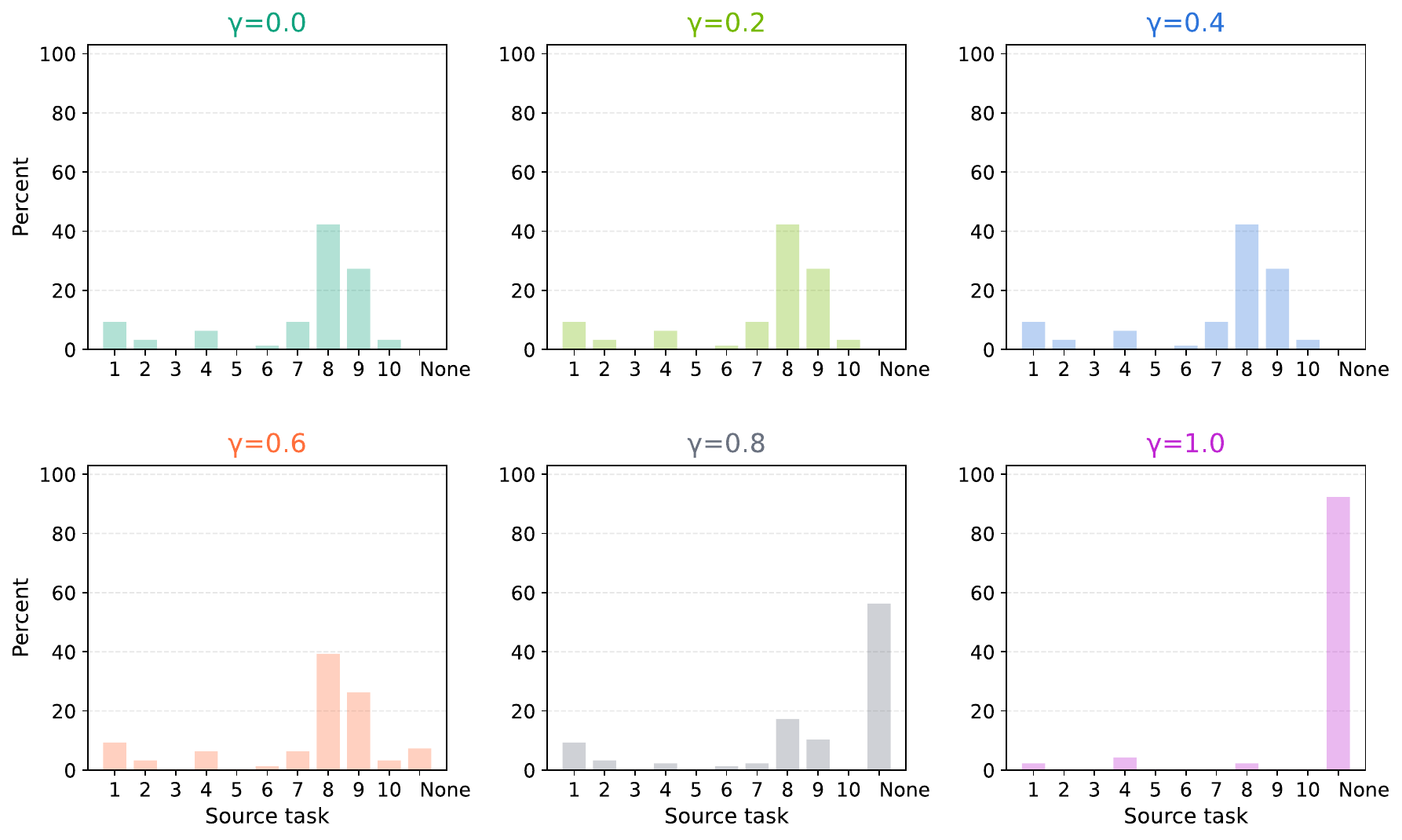}
    \end{minipage}
    \caption{Set 2 (highest NCC in set $\approx$ 0.60) trajectory and source-task usage across $\gamma$ values.}
    \label{fig:gamma-both}
\end{figure*}

\begin{figure*}[t]
    \centering
    \begin{minipage}[t]{0.32\textwidth}
        \centering
        \includegraphics[width=\linewidth]{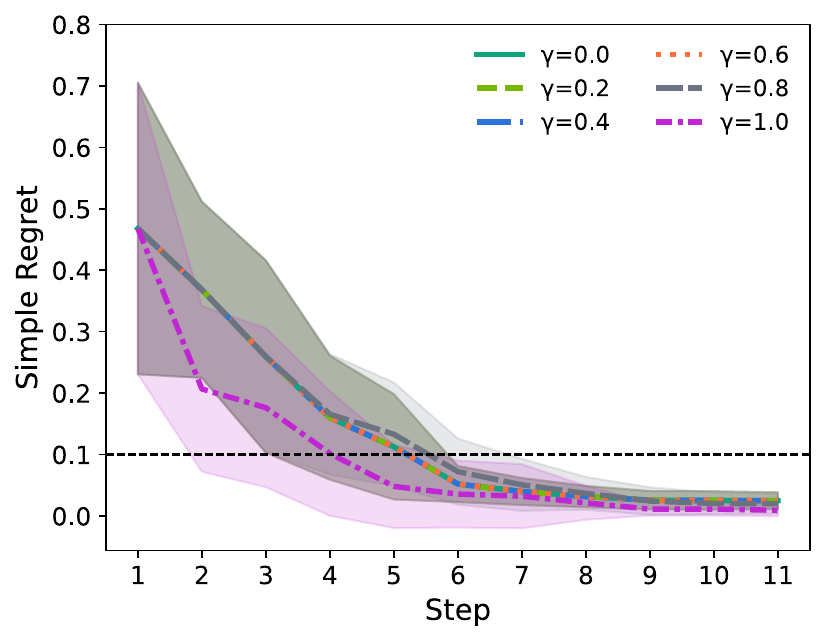}
    \end{minipage}\hfill
    \begin{minipage}[t]{0.64\textwidth}
        \centering
        \includegraphics[width=\linewidth]{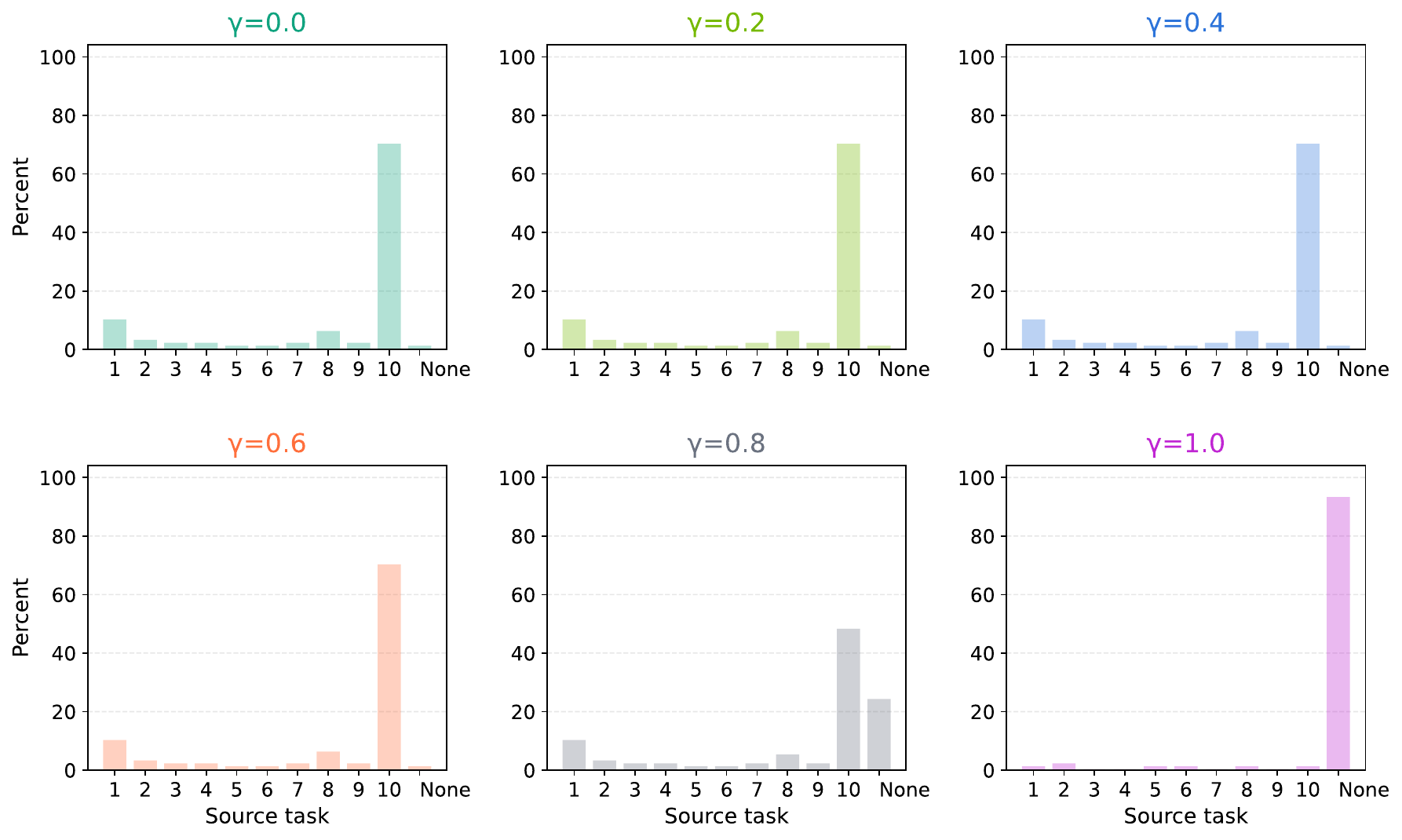}
    \end{minipage}
    \caption{Set 3 (highest NCC in set $\approx$ 0.82) trajectory and source-task usage across $\gamma$ values.}
    \label{fig:gamma-both}
\end{figure*}
\begin{figure*}[h]
    \centering
    \begin{minipage}[t]{0.32\textwidth}
        \centering
        \includegraphics[width=\linewidth]{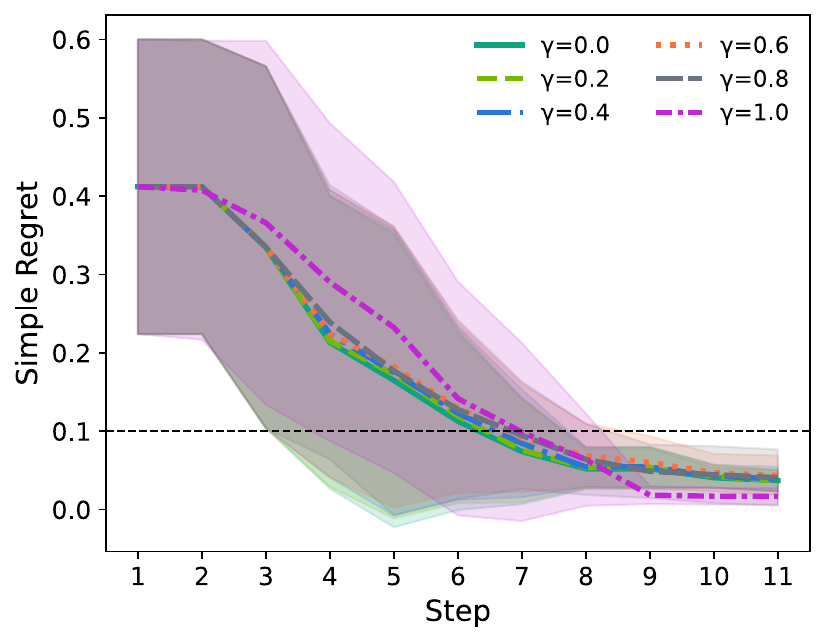}
    \end{minipage}\hfill
    \begin{minipage}[t]{0.64\textwidth}
        \centering
        \includegraphics[width=\linewidth]{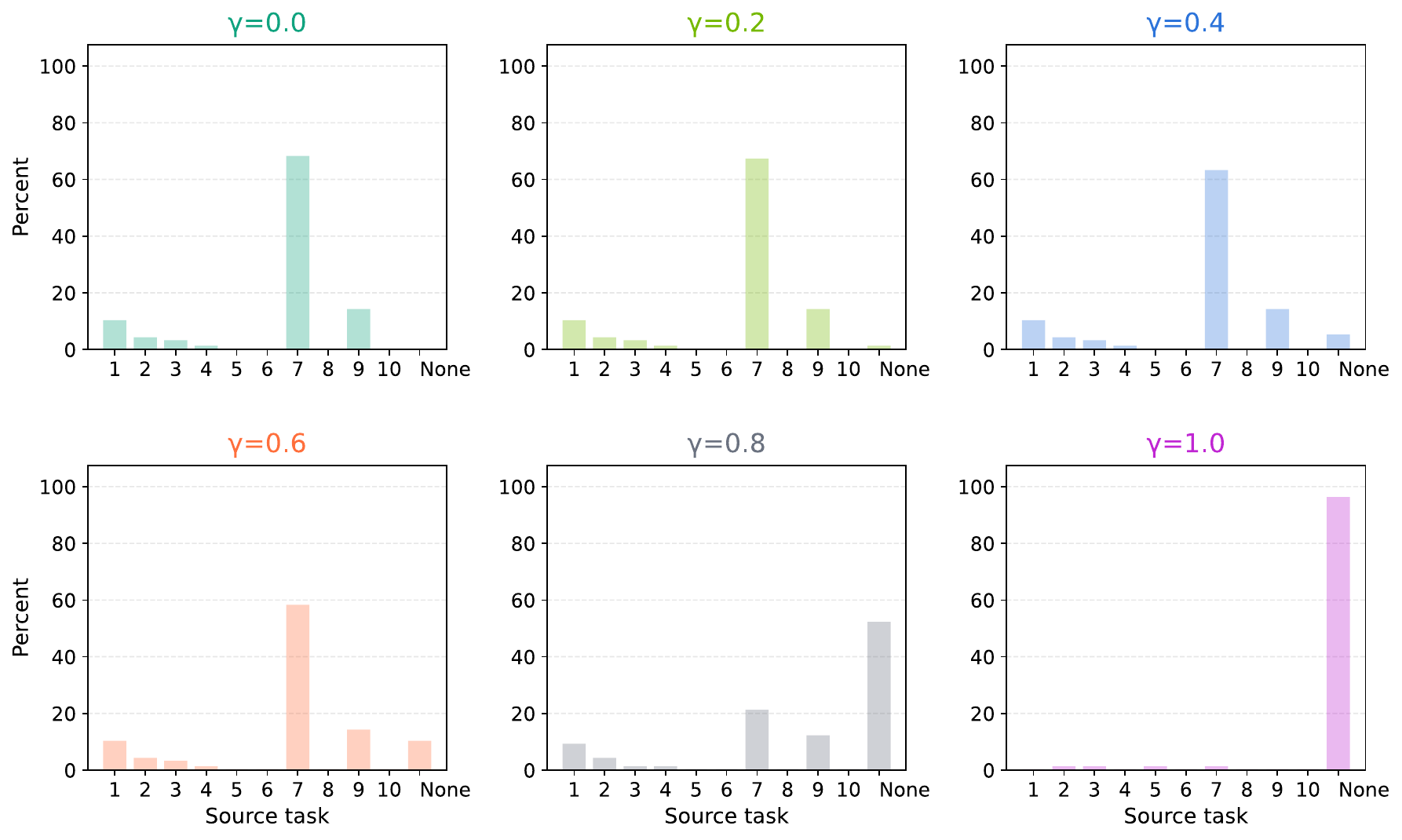}
    \end{minipage}
    \caption{Set 4 (highest NCC in set $\approx$ 0.10) trajectory and source-task usage across $\gamma$ values.}
    \label{fig:gamma-both}
\end{figure*}

This ablation studies the effect of the NCC threshold $\gamma$, which controls the extent to which BayMOTH relies on source task transfer, we sweep $\gamma$ and record the distribution of selected source tasks, measured as the percentage of total optimization steps, across randomly initialized runs. 

The results show that when $\gamma$ is below the NCC value of the most correlated source task, the optimization trajectory remains largely unchanged. This is consistent with the source task usage distributions, which are also nearly identical across these settings. Once $\gamma$ exceeds the NCC of the most correlated source task, however, the method enters a weak-transfer regime: usage of the highest-NCC source task drops substantially, and BayMOTH increasingly selects no source task as $\gamma \to 1$.

On the high-relatedness set, increasing $\gamma$ from $0.8$ to $1.0$ slows the initial reduction in regret, while leaving the final regret essentially unchanged. This behavior is consistent with the corresponding shift in source-task usage, from predominantly selecting source task 1 and source task 3 to predominantly selecting no source task. On the low-relatedness set, the $\gamma=1.0$ setting also yields a noticeably slower initial reduction in regret than the other threshold values, but achieves the best final performance, with regret approximately $2\%$ lower than the other settings. 

Overall, this ablation characterizes the behavior of BayMOTH with usage of source tasks across the spectrum from strong transfer to near always-fallback behavior.

\newpage
\subsection{Ablation on estimator number of samples M}
\label{app:ablation_m}

\begin{figure}[h]
    \centering
    \includegraphics[width=0.65\linewidth]{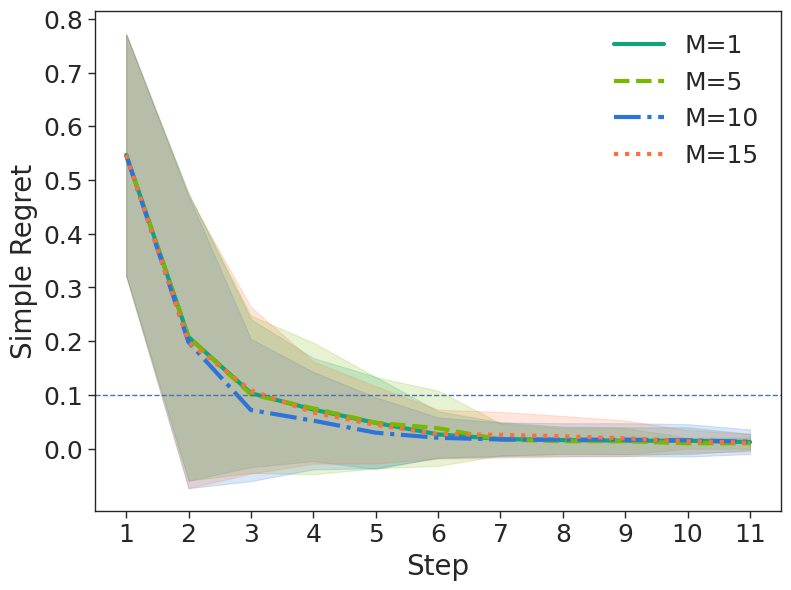}
    \caption{
        Scan on the number of samples for the Monte Carlo estimator, using set 1 of the synthetic ICF dataset. Shown is $<>$ line and $\pm\sigma$ shaded region over 30 randomly initialized starting locations.
    }
    \label{fig:m_number}
\end{figure}
Here, we investigate the effect of the sampling number for the Monte Carlo estimator of the expectation term in Equation~\ref{eq:BayMOTH}. We find that BayMOTH is only weakly sensitive to the choice of M on Set 1 of the synthetic ICF dataset. As is shown in Figure~\ref{fig:m_number}, increasing M beyond M=1 yields only marginal performance improvements, with M=10 performing the best, without a clear monotonic trend. This behavior is consistent with the chosen $\mathcal{GP}_{\chi^{*}}$ posterior being well-constrained in our experimental settings. Consequently, the estimator variance is small and a modest number of samples is sufficient. We anticipate that the dependence on M would become more pronounced in settings where the posterior variance of $\mathcal{GP}_{\chi^{*}}$ is larger, such as under severe data sparsity, higher-dimensional domains, or strongly misspecified kernels, where additional samples may be required to stabilize the estimator. We anticipate the M number to be treated as a hyperparameter which can be tuned using the training/validation tasks for a specific problem domain.

\subsection{Ablation on $\alpha$}
\label{app:ablation_alpha}

\begin{figure}[h]
    \centering
    \includegraphics[width=0.65\linewidth]{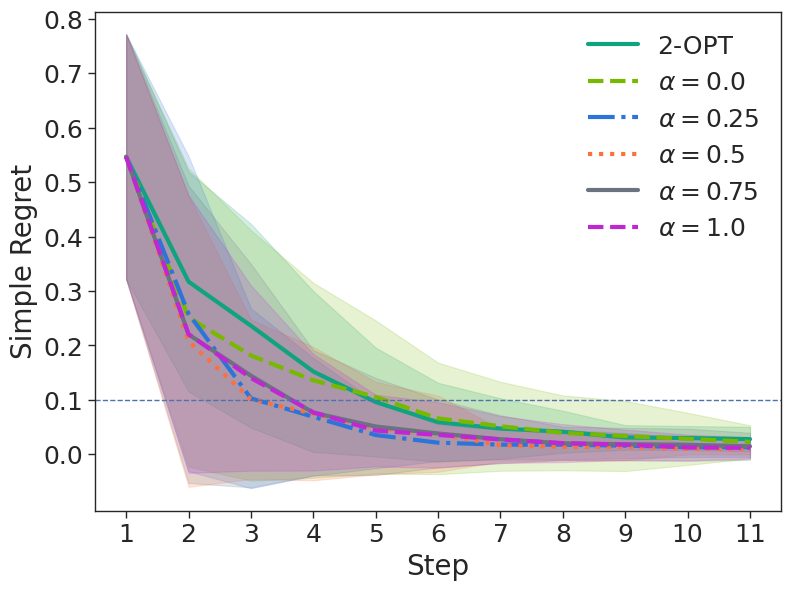}
    \caption{
        Ablation on the greedy balancing parameter $\alpha$. Shown is $<>$ line and $\pm\sigma$ shaded region over 30 randomly initialized starting locations.
    }
    \label{fig:appendix_example}
\end{figure}

Here, again using set 1 from the ICF datasets, we study the effect of the greedy balancing parameter $\alpha$ that controls the relative weighting between the immediate greedy improvement term and the meta-informed lookahead term in BayMOTH's policy. Several observations emerge. First, even for $\alpha=0$ which corresponds to pure horizon planning under $\mathcal{GP}_{\chi^{*}}$, BayMOTH outperforms the 2-OPT baseline during the initial 5 sample steps (approx. 10\% difference at sample step 3). In the domain of ICF experiments this is significant as sample evaluation costs are extremely high (500k USD - 1M USD per sample). This indicates that incorporating meta-information into $\Lambda_1$ provides a meaningful advantage in meta-BO settings even in the absence of the greedy improvement term. At the same time, intermediate values of $\alpha$ (specifically $\alpha \in \{0.25,0.5\}$) yield the strongest overall performance and exhibit comparable behavior, suggesting that jointly leveraging both immediate and lookahead components is beneficial. While $\alpha=1$, which places full emphasis on the immediate greedy improvement term, still improves upon $\alpha=0$, it slightly under-performs relative to intermediate settings. This pattern indicates that neither component alone is sufficient to fully capture the benefits of the policy; rather, their combination provides a more effective balance between short-horizon planning and meta-informed greedy exploitation. Therefore, the greedy balancing parameter should be treated as hyperparameter by a user and based on the nature of the problem domain and goals should be tuned.

% In the preamble:
% \usepackage{array}

\subsection{Computational cost analysis}
\label{app:compute-cost}

BayMOTH is not intended to reduce computational cost on a per-decision basis relative to offline-trained meta-BO methods. Instead, it makes a different computational tradeoff: rather than incurring the complexity of offline RL-based meta-training, it shifts computation to online planning within a simple and interpretable framework. Its intended advantage is therefore not lower proposal-time cost, but algorithmic simplicity, explicit decision logic, and the avoidance of a separate offline policy-training phase. In the settings considered in this paper, this tradeoff is favorable for the reasons discussed in the main text.

To make this tradeoff explicit, we compare the offline RL meta-training cost of MetaBO and NAP against BayMOTH's online cost on the \texttt{ranger9} test problem ($d=10$), and additionally provide a profiling-based breakdown of BayMOTH's internal runtime components.

In our implementation, the source-task GPs are fit once offline, while the comparison GPs used for virtual-environment selection are updated after each real observation. At proposal time, source-task selection itself is lightweight and requires only NCC scoring across candidate source tasks. The dominant computational cost does not arise from this GP-based selection mechanism, but from the nested two-step acquisition optimization used to generate each proposal. Specifically, each proposal is obtained using multi-restart L-BFGS-B optimization (3 restarts, up to 100 iterations per restart). Each acquisition evaluation then performs a horizon-step simulation, which, over $M$ Monte Carlo samples, requires future-GP fitting and future-EI evaluation. As a result, BayMOTH's runtime is driven primarily by the 2-OPT-style horizon-planning term rather than by the transfer-selection stage itself.

\begin{table}[t]
\centering
\caption{High-level computational cost profile of BayMOTH and offline-trained baselines.}
\label{tab:cost-profile}
\renewcommand{\arraystretch}{1.2}
\begin{tabular}{|>{\raggedright\arraybackslash}p{0.18\linewidth}
                |>{\raggedright\arraybackslash}p{0.70\linewidth}|}
\hline
Method & Cost profile \\
\hline
BayMOTH & Offline source task GP meta-training (cheap $\approx$ 40s); primary computational cost is paid online during proposal generation. \\
\hline
MetaBO & Main cost is offline PPO training; approximately 4 hours for 2000 PPO iterations. \\
\hline
NAP & Main cost is offline PPO training; approximately 11 hours for 2000 PPO iterations. \\
\hline
\end{tabular}
\renewcommand{\arraystretch}{1.0}
\end{table}

\vspace{5mm}

\begin{table*}[t]
\centering
\caption{Profiling-based breakdown of BayMOTH proposal-time cost on \texttt{ranger9} ($d=10$).}
\label{tab:baymoth-cost-breakdown}
\renewcommand{\arraystretch}{1.2}
\begin{tabular}{|>{\raggedright\arraybackslash}p{0.27\textwidth}
                |>{\raggedleft\arraybackslash}p{0.14\textwidth}
                |>{\raggedleft\arraybackslash}p{0.11\textwidth}
                |>{\raggedleft\arraybackslash}p{0.11\textwidth}
                |>{\raggedright\arraybackslash}p{0.29\textwidth}|}
\hline
BayMOTH stage & Frequency & Share of proposal time & Avg per call & Meaning \\
\hline
Source-task GP training
& once
& N/A
& 40.00\,s
& One-time offline training. \\
\hline
Comparison GP update
& once after observation
& N/A
& 0.234\,s
& Online overhead outside the proposer. \\
\hline
Proposal optimization
& once per suggestion
& 100.00\%
& 44.84\,s
& Full online proposal cost. \\
\hline
BayMOTH acquisition evaluation
& inside L-BFGS-B
& 98.66\%
& 145.85\,ms
& Nearly all proposal-time computation is spent here. \\
\hline
Horizon-step simulation
& once per acquisition evaluation
& 97.99\%
& 144.85\,ms
& Main bottleneck. \\
\hline
Future EI (inside horizon)
& $M$ times per acquisition evaluation
& 72.17\%
& 21.34\,ms
& Repeated future-EI computation dominates the inner loop. \\
\hline
Current EI (outside horizon)
& once per acquisition evaluation
& 0.67\%
& 0.99\,ms
& Negligible contribution. \\
\hline
Selection + NCC + optimizer orchestration
& once per suggestion
& 1.34\%
& 0.60\,s
& Upper bound on source selection and orchestration overhead. \\
\hline
\end{tabular}
\renewcommand{\arraystretch}{1.0}
\end{table*}

Overall, these results show that BayMOTH's computational overhead is dominated by explicit online lookahead planning, not by the GP-based source-task selection mechanism. In particular, the overwhelming majority of proposal-time cost is spent inside acquisition evaluation and horizon-step simulation, while source-task selection and orchestration contribute only a small fraction of runtime. This makes the computational tradeoff clear: BayMOTH avoids offline RL meta-training, but pays for this simplicity through a more expensive online planning step.

\noindent\textit{Computational resources.}
All experiments were conducted on a single workstation with an NVIDIA Quadro RTX 6000 GPU and an Intel(R) Xeon(R) W-2125 CPU @ 4.00GHz.

\section{Additional experimental details}
\label{app:exp_details}

\subsection{Synthetic ICF experiments}
\label{app:icf_exp_details}
\par For the synthetic ICF experiments, the optimization is done with respect to parameters $\text{x}1$ and $\text{x}2$. These parameters correspond to features of laser pulse shape design, which along with a target specification make up the explicit inputs ($\approx 20\text{D}$) for an ICF implosion. A number of performance metrics \cite{betti2010thermonuclear,betti2015alpha,christopherson2018theory,christopherson2020theory} have been devised to asses the quality of an implosion. The response of the performance metric with respect to $[\text{x}1,\text{x}2]$ is the focus of these optimization experiments. The parameters $[\text{x}1,\text{x}2]$ are consequential for performance through non-linear 1D and 3D physics effects, which are studied by using physics simulation codes such as LILAC \cite{delettrez1987effect}, DRACO \cite{radha2005multidimensional} and statistical models \cite{gopalaswamy2019tripled,lees2021experimentally,lees2025approaching}. However, as simplifying assumptions and reduced order modeling methodologies are employed in the simulations and models, the true response of performance w.r.t $\text{x}1$ and $\text{x}2$
is potentially not captured, which contributes to the lack of a priori predictive ability. The model inaccuracies could mean the value of their prior information could be limited, motivating the use of BayMOTH. 
\par For constructing source tasks, the physics models are varied in the physics simulator along with the coefficients of the 3D degradation model of \cite{lees2021experimentally} which is convolved with the simulation to produce a response function. This results in varied shock-timing \cite{craxton2015direct} from the front end of the pulse ($\text{x}1$ and $\text{x}2$ are part of the front end) resulting in a non-ideal variation of the entropy profile in the fusion fuel \cite{anderson2006adiabat} and consequently effecting the performance \cite{lees2025approaching}. Using this procedure, a total of 11 response functions were generated. We consider four distinct source-test task combinations for the experiments, these are shown in Figures \ref{fig:icf_all_sets}. The four source-test task configurations are constructed to exhibit progressively increasing distributional shift between the test task and the source tasks, with the lowest shift setting corresponding to the configuration with the highest average positive NCC, and subsequent settings defined by decreasing average positive correlation. These experiments are designed to span scenarios ranging from cases in which the source tasks (i.e., physics-based models) accurately capture the experimental response to regimes in which these models are increasingly misaligned with, or fail to represent, the true response, and as such these experiments probe the robustness of the BO methods when the assumed physics-based models progressively diverge from the true experimental response. Additionally, these experiments assess sample efficiency in realistic settings where a candidate physics-based response is provided as a source task, but its validity with respect to the true experimental response is unknown a priori.

\begin{figure}[p]
    \centering
    \includegraphics[width=\linewidth]{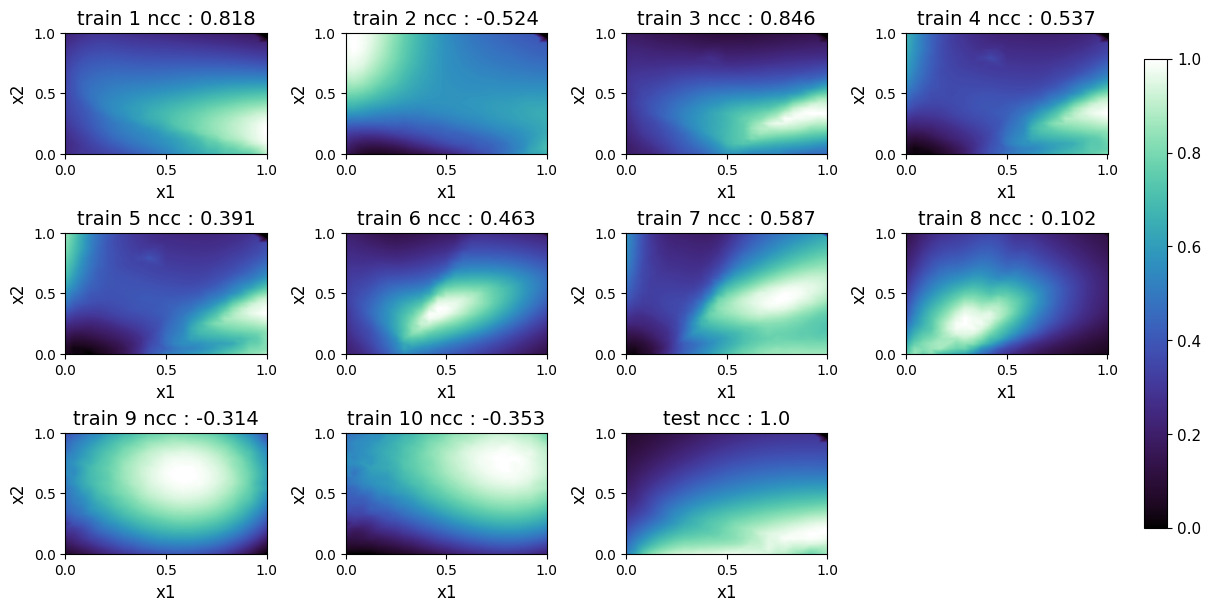}
    \caption*{(a) Set 1: High relatedness}

    \vspace{0.5em}

    \includegraphics[width=\linewidth]{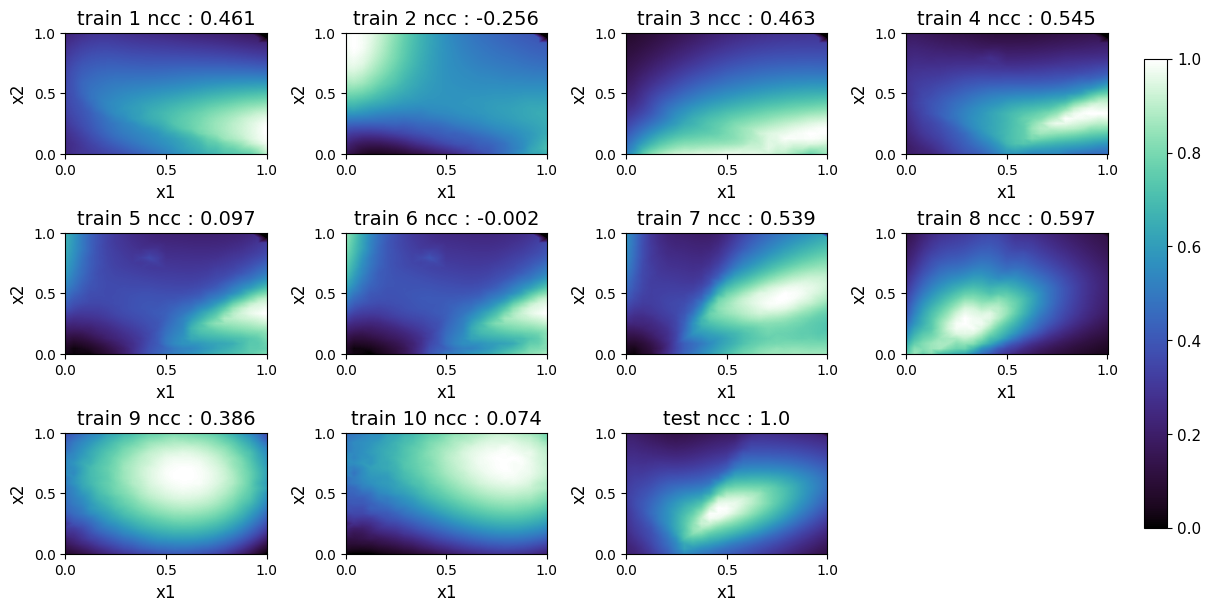}
    \caption*{(b) Set 2: Medium relatedness}

    \caption{Source/test tasks across relatedness settings. (a) Set 1: High relatedness. (b) Set 2: Medium relatedness. (c) Set 3: Moderate relatedness. (d) Set 4: Low relatedness.}
    \label{fig:icf_all_sets}
\end{figure}

\begin{figure}[p]\ContinuedFloat
    \centering
    \includegraphics[width=\linewidth]{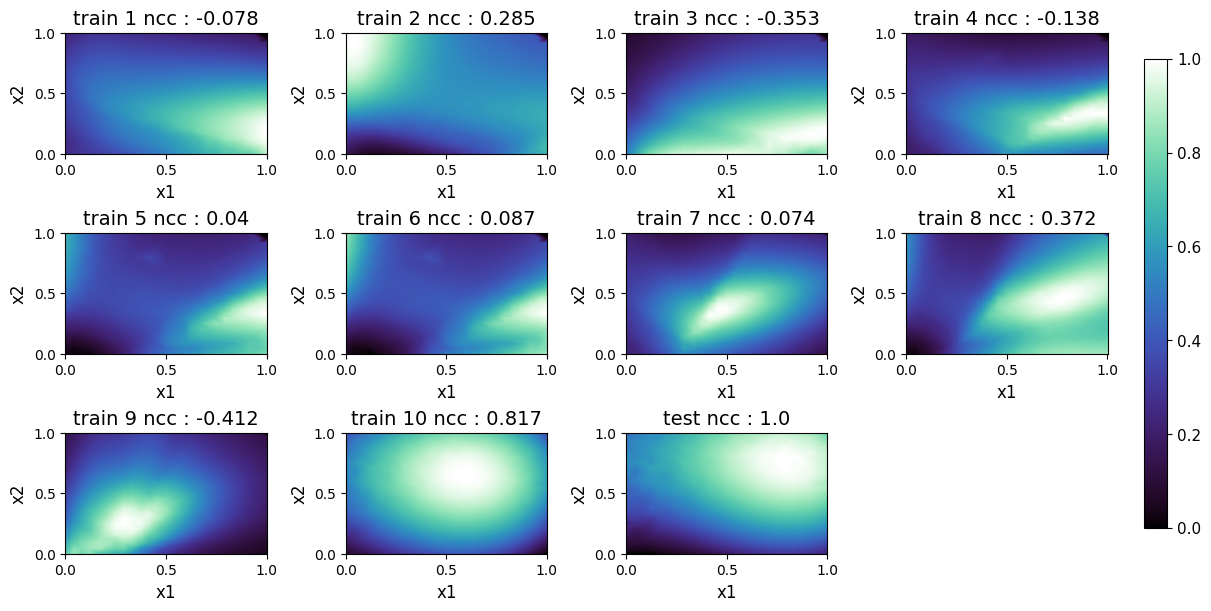}
    \caption*{(c) Set 3: Moderate relatedness}

    \vspace{0.5em}

    \includegraphics[width=\linewidth]{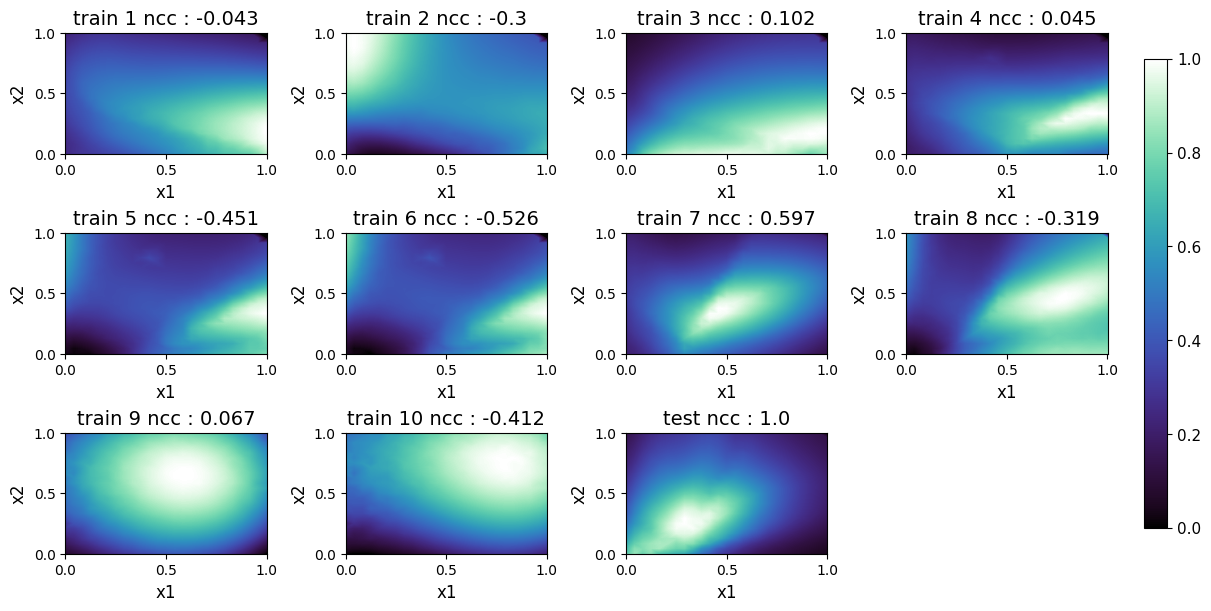}
    \caption*{(d) Set 4: Low relatedness}

    \caption[]{Source/test tasks across relatedness settings (continued).}
\end{figure}

\newpage
\subsection{Source Tasks Sensitivity}
\label{app:sensitivity_details}

\begin{figure}[t]
    \centering
    \includegraphics[width=\linewidth]{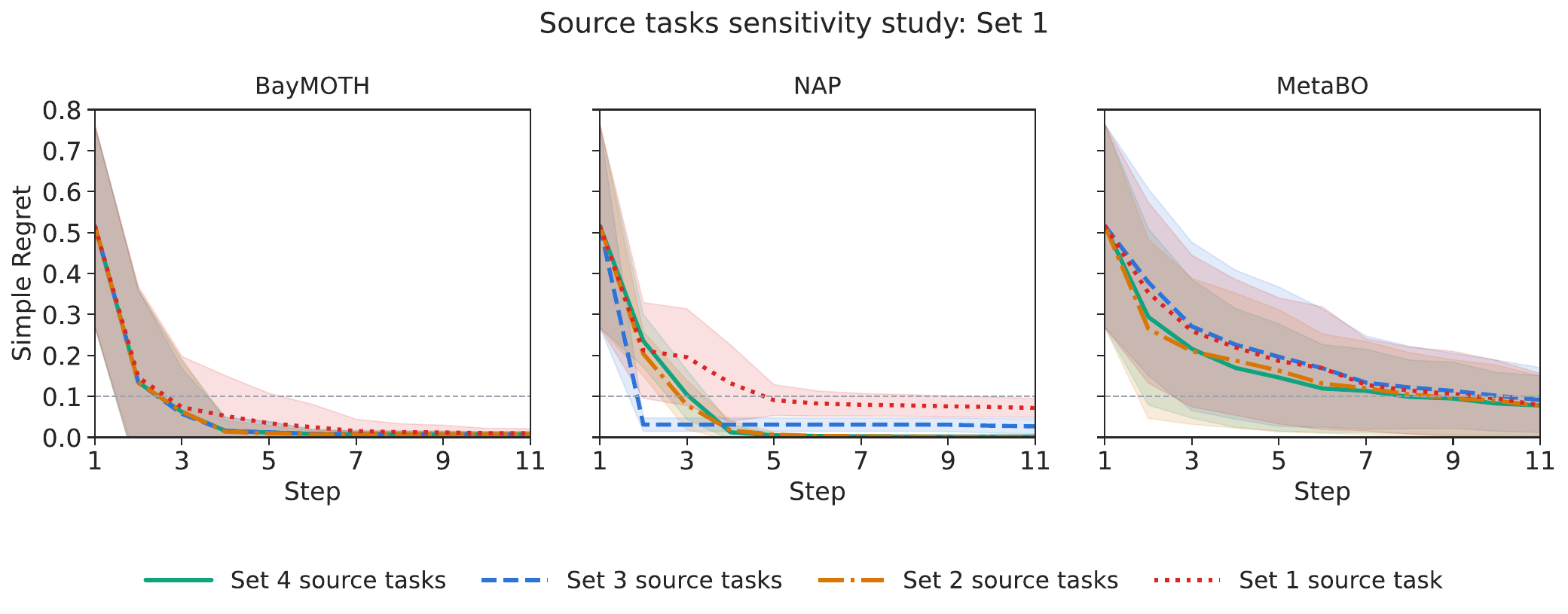}
    \caption{Source task pool sensitivity studies on test task performance (Set~1 test task). Shown is $<>$ line and $\pm\sigma$ shaded region over 100 randomly initialized starting locations.}
    \label{fig:sensitivity_set1}
\end{figure}

\begin{figure}[t]
    \centering
    \includegraphics[width=\linewidth]{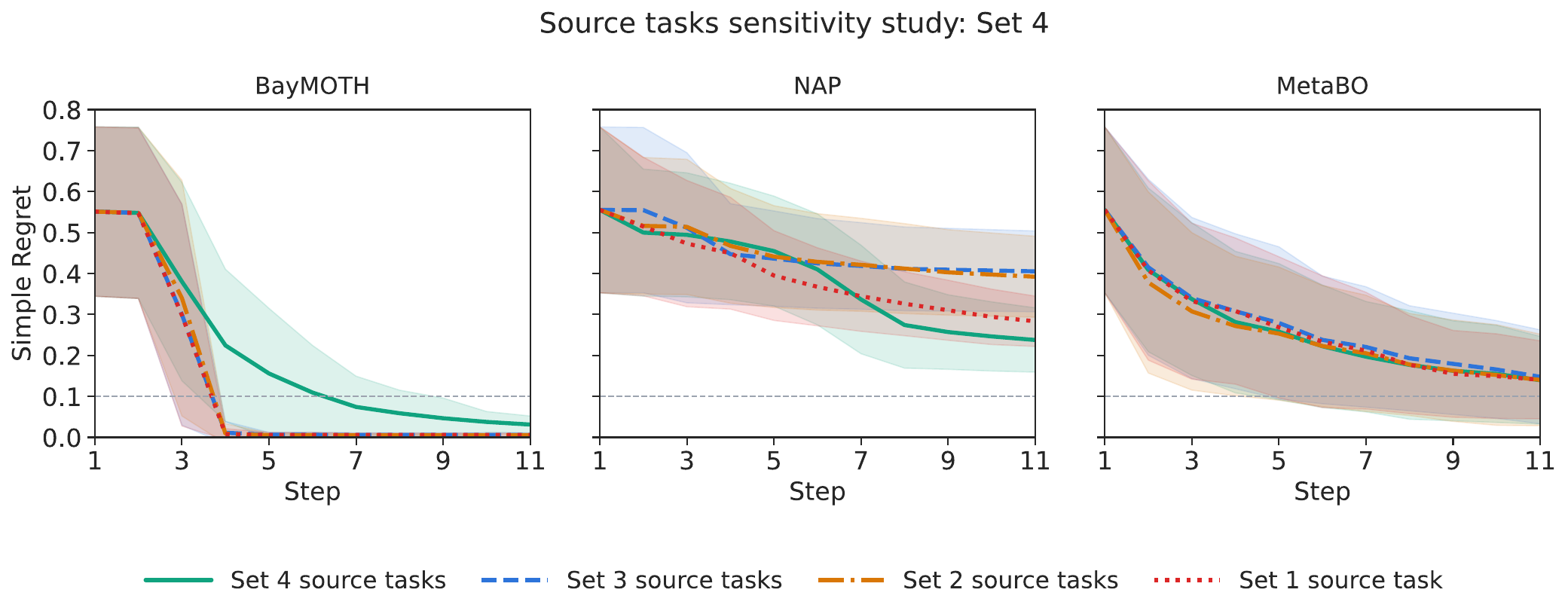}
    \caption{Source task pool sensitivity studies on test task performance (Set~4 test task). Shown is $<>$ line and $\pm\sigma$ shaded region over 100 randomly initialized starting locations.}
    \label{fig:sensitivity_set4}
\end{figure}

In Sections~\ref{icf_exps} and \ref{app:icf_exp_details} we describe the four source–test configurations (Sets 1–4), each specifying a test task and an associated pool of source tasks. Using those sets we additionally probe the sensitivity to the choice of source-task pool which is used for meta-training while holding the test task fixed for the meta-BO approaches, MetaBO, NAP and BayMOTH. We consider two target instances: (i) the test task from Set 1 is used and (ii) the test task from Set 4 is used. For each fixed target, we alternate the source-task pools from Sets 1–4, one pool at a time, without modifying their composition. Since these source pools are reused “as is,” there are three configurations in which the fixed test task appears among the selected source tasks i.e. when source tasks from set 2,3,4 are used in set 1 test task case and  when source tasks from set 1,2,3 are used in set 4 test task case. This setup enables us to understand how performance depends on source pool selection and if the meta-BO methods can tightly leverage the specific information needed (presented during meta-training) for enabling rapid optimization.
\par MetaBO exhibits low sensitivity to the choice of source pool, as evidenced in Figures~\ref{fig:sensitivity_set1} and~\ref{fig:sensitivity_set4}. The optimization trajectories show little variation across source-task configurations, suggesting robustness to source-task choice. However, it also indicates that MetaBO is less capable of learning sophisticated sampling policies that can accelerate optimization. NAP shows more variation in its trajectories indicating it is more sensitive to the choice of source pool. As seen in Figure~\ref{fig:sensitivity_set1}, in instances when the test task is including in the source pool, rapid reduction in regret occurs, indicating that NAP is effective in recognizing relevant meta-training information and utilizing it. However, Figure~\ref{fig:sensitivity_set4} shows that even when the test task is included in the source pool, NAP fails to reach the optimum. We hypothesize that this arises because the trajectory rollouts from NAP’s learned policy are dominated by the overall source pool distribution. In the case of using the set 4 test task, the average similarity between the source tasks and the test task remains low, and therefore, the learned policy generates poor samples, even if highly relevant meta-training information is present.
\par By contrast, BayMOTH maintains low regret across all source-pool configurations, with a discernible performance gain observed when the test task is included in the source pool. This behavior is expected as BayMOTH's protocol for gating to the pertinent alternative virtual environment allows it to make direct use of the relevant meta-information. In this setting, BayMOTH's performance exhibits minimal sensitivity to the choice of source pool, provided that the test task is included.

\subsection{Model Hyperparameters}

\begin{figure*}[!t]
    \centering
    \begin{subfigure}[t]{0.48\linewidth}
        \centering
        \includegraphics[width=\linewidth]{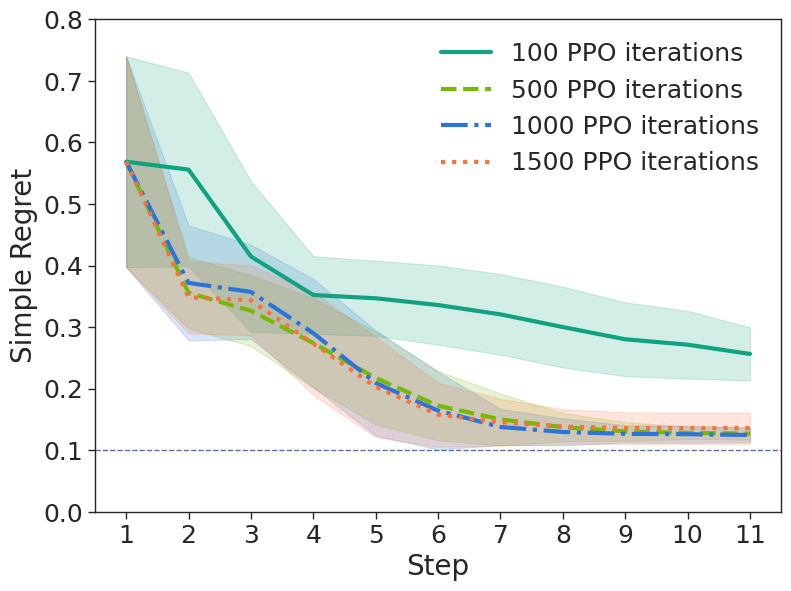}
        \caption{Convergence study over PPO iterations for NAP. Curves show the mean trajectory, with shaded regions indicating $\pm 1$ standard deviation over 100 randomly initialized starting locations.}
        \label{fig:app_nap_convergence}
    \end{subfigure}\hfill
    \begin{subfigure}[t]{0.48\linewidth}
        \centering
        \includegraphics[width=\linewidth]{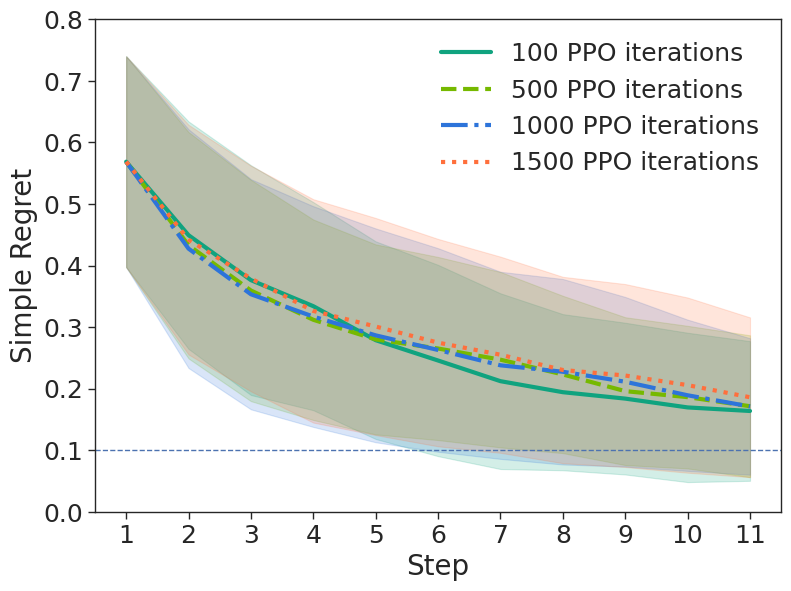}
        \caption{Convergence study over PPO iterations for MetaBO. Curves show the mean trajectory, with shaded regions indicating $\pm 1$ standard deviation over 100 randomly initialized starting locations.}
        \label{fig:app_metabo_convergence}
    \end{subfigure}
    \caption{Convergence study over PPO iterations using Set~2 of the ICF dataset.}
    \label{fig:convergence}
\end{figure*}

Here, we provide the associated hyperparameters used for the different models. Any additional details are present in the provided code repository. All hyperparameters were kept the same for all experiments. Minimal manual tuning was done for selection. The hyperparameters for MetaBO and NAP, were taken from their respective codebases, and the implementation from the NAP repository (https://github.com/huawei-noah/HEBO/tree/master/NAP) was used. The only change we make to the training configuration of MetaBO and NAP is for the synthetic ICF experiments, where we reduce the number of training PPO iterations to 500 due to computational resource limitations. However, we first conducted a convergence study on any changes to the optimization trajectory using set 2 of the ICF dataset. We found no meaningful difference between 500 PPO iterations vs more PPO iterations, we show evidence for this in Figure~\ref{fig:convergence}.

\begin{table}[h]
\centering
\caption{
BayMOTH configuration.
}
\label{tab:appendix_hyperparams_baymoth}
\begin{tabular}{l c}
\toprule
\multicolumn{2}{c}{\textbf{BayMOTH}} \\
\midrule
$\Lambda_0$ $\mathcal{GP}$ kernel                       & RBF \\
$\Lambda_0$ $\mathcal{GP}$ kernel  length-scale         & 0.05*4 \\

$\Lambda_1$ $\mathcal{GP}$ kernel                       & RBF \\
$\Lambda_1$ $\mathcal{GP}$ kernel  length-scale         & 0.05 \\

$\mathcal{GP}_{\chi}$ kernel                            & RBF \\
$\mathcal{GP}_{\chi}$ kernel  length-scale              & 0.05\\
$\tilde{\mathcal{GP}}_{\chi}$ Kernel                    & RBF\\
$\tilde{\mathcal{GP}}_{\chi}$ Kernel  length-scale      & 0.05*4\\
Estimator number of samples $M$                         & 5\\
greedy balancing parameter $\alpha$                     & 0.5\\

Observation noise  & $10^{-6}$ \\
Correlation threshold $\gamma$   & 0.7 \\
Input normalization        & $[0,1]^d$ \\
\bottomrule
\end{tabular}
\end{table}

\begin{table}[t]
\centering

\caption{2-OPT configuration.}
\label{tab:appendix_hyperparams_twoopt}
\begin{tabular}{l c}
\toprule
\multicolumn{2}{c}{\textbf{2-OPT}} \\
\midrule
$\Lambda_0$ $\mathcal{GP}$ kernel               & RBF \\
$\Lambda_0$ $\mathcal{GP}$ kernel length-scale  & 0.05*4 \\
$\Lambda_1$ $\mathcal{GP}$ kernel               & RBF \\
$\Lambda_1$ $\mathcal{GP}$ kernel length-scale  & 0.05 \\
Estimator number of samples $M$                         & 5\\
Input normalization                            & $[0,1]^d$ \\
\bottomrule
\end{tabular}

\vspace{2mm} % small, controlled gap

\caption{GPBO-EI configuration.}
\label{tab:appendix_hyperparams_gpbo_ei}
\begin{tabular}{l c}
\toprule
\multicolumn{2}{c}{\textbf{GPBO-EI}} \\
\midrule
$\Lambda_0$ $\mathcal{GP}$ kernel               & RBF \\
$\Lambda_0$ $\mathcal{GP}$ kernel length-scale  & 0.05*4 \\
Input normalization                            & $[0,1]^d$ \\
\bottomrule
\end{tabular}

\end{table}

%%%%%%%%%%%%%%%%%%%%%%%%%%%%%%%%%%%%%%%%%%%%%%%%%%%%%%%%%%%%

\newpage

\end{document}